\newcommand{\xqedhere}[2]{%
\rlap{\hbox to#1{\hfil\llap{\ensuremath{#2}}}}}
\newtheorem{theorem}{\bf Theorem}
\newtheorem{lemma}{\bf Lemma}
\newtheorem{corollary}{\bf Corollary}
\newtheorem{assumption}{\bf Assumption}
\newtheorem{remark}{\bf Remark}
\def\qed{$\blacksquare$}
\def\phi{\varphi}
\def\({\left(}
\def\){\right)}
\def\b0{{\mathbf{0}}}
\newcommand{\Rmnum}[1]{\expandafter\@slowromancap\romannumeral #1@}
\begin{document}
	\bstctlcite{ref:BSTcontrol}
	
	\title{Federated Split Learning with Model Pruning and Gradient Quantization in Wireless Networks}

\author{~Junhe~Zhang,~Wanli~Ni,~Dongyu~Wang}
	
\maketitle

\newcommand\blfootnote[1]{%
	\begingroup 
	\renewcommand\thefootnote{}\footnote{#1}%
	\addtocounter{footnote}{-1}%
	\endgroup 
}

\blfootnote{	
\textit{(Corresponding author: Dongyu Wang.)}

Junhe Zhang and Dongyu Wang are with the Key Laboratory
of Universal Wireless Communication, Ministry of Education, Beijing University of Posts and Telecommunications, Beijing 100876, China (e-mail: jhzhangbupt@163.com; dy\_wang@bupt.edu.cn).

Wanli Ni is with the Department of Electronic Engineering, Tsinghua University, China (e-mail: niwanli@tsinghua.edu.cn).
}
\vspace{-4 mm}

\begin{abstract}
As a paradigm of distributed machine learning, federated learning typically requires all edge devices to train a complete model locally. However, with the increasing scale of artificial intelligence models, the limited resources on edge devices often become a bottleneck for efficient fine-tuning. 
To address this challenge, federated split learning (FedSL) implements collaborative training across the edge devices and the server through model splitting.
In this paper, we propose a lightweight FedSL scheme, that further alleviates the training burden on resource-constrained edge devices by pruning the client-side model dynamicly and using quantized gradient updates to reduce computation overhead. Additionally, we apply random dropout to the activation values at the split layer to reduce communication overhead. We conduct theoretical analysis to quantify the convergence performance of the proposed scheme. Finally, simulation results verify the effectiveness and advantages of the proposed lightweight FedSL in wireless network environments.
\end{abstract}

\begin{IEEEkeywords}
	Federated split learning, model pruning, gradient quantization, dropout, convergence analysis.
\end{IEEEkeywords}

\section{Introduction}
As the Internet of Things (IoT) landscape rapidly expands, wireless edge devices are now generating colossal volumes of data, fostering the integration of artificial intelligence (AI) within wireless networks. Federated learning (FL), a groundbreaking distributed machine learning approach \cite{bg1}, has emerged as a solution that challenges traditional centralized learning paradigms. Distinctively, FL enables each edge device to retain its dataset locally, facilitating distributed training through the exchange of solely local model parameters, thereby safeguarding privacy. Nevertheless, the escalating complexity and size of AI models pose significant challenges for edge devices, which often operate with constrained computational and communication resources. Consequently, these devices struggle to locally train full models, rendering on-device training for AI-powered, computationally intensive tasks impractical.

Split learning (SL) tackles FL's challenges by partitioning deep neural networks (DNNs) between edge devices and an edge server. This setup fosters collaborative training, with intermediate data transmitted wirelessly \cite{sl_bg1}. By shifting computational load to the server, SL effectively alleviates the computation burden of edge devices \cite{wanli}. However, early SL implementations followed a sequential training paradigm, causing delays and inefficiencies due to the absence of parallel computing.

Recently, federated split learning (FedSL) framework has been proposed \cite{splitfed}, which harnesses model splitting and parallel computing to enhance training efficiency across edge devices. To further expedite model training, the authors in \cite{epsl} proposed a parallel SL framework that streamlined backpropagation by diminishing activation gradient dimensions through last-layer gradient aggregation. In addition, the authors in \cite{sgsl} introduced a methodology without additional client-to-client communication by broadcasting a universally averaged gradient at the split layer.
The authors in \cite{accsfl} considered devices with individual split points, and analyzed how split point selection influences convergence and latency. Similarly, the authors in \cite{adaptsfl} analyzed the impact of client-side model splitting and aggregation, aiming to minimize convergence time over edge networks.

However, most of existing FedSL schemes have overlooked the storage and computational burdens imposed by deploying and fine-tuning large AI models on resource-constrained devices. Additionally, the transmission of the smashed data between devices and the base station (BS) introduces substantial communication overhead, especially when the batch size and the dimensionality of the intermediate features are large. To reduce the computational and communication overhead of existing FedSL on resource-constrained devices, we propose a lightweight FedSL scheme, which allivates training burden by pruning model parameters, quantizing gradients, and randomly dropping intermediate activation values during wireless transmission.
The main contributions of this work are as follows:
\begin{itemize}
	\item
	We propose a lightweight FedSL scheme that alleviates the storage and computational load on resource-constrained devices when fine-tuning large models. Meanwhile, a random dropout mechanism is applied on the intermediate activation values to reduce the communication overhead.
	\item 
	We undertake a theoretical analysis by deriving the convergence upper bound of the lightweight FedSL scheme. Our findings reveal that while a higher pruning rate can potentially hinder convergence, strategies such as more frequent aggregation and employing a shallower split layer can counteract this effect, facilitating faster model convergence.
	\item
	We substantiate our claims through numerical simulations, demonstrating the superiority of our proposed scheme. Notably, the integration of pruning and quantization in lightweight FedSL not only accelerates training but also serves as an effective regularization technique, enhancing model performance, particularly in scenarios where neural networks exhibit substantial redundancy. Moreover, moderate dropout rate can reduce communication latency without sacrificing too much performance.
\end{itemize}

\section{System Model}

\begin{figure}
	\centering
	\includegraphics[scale=0.45]{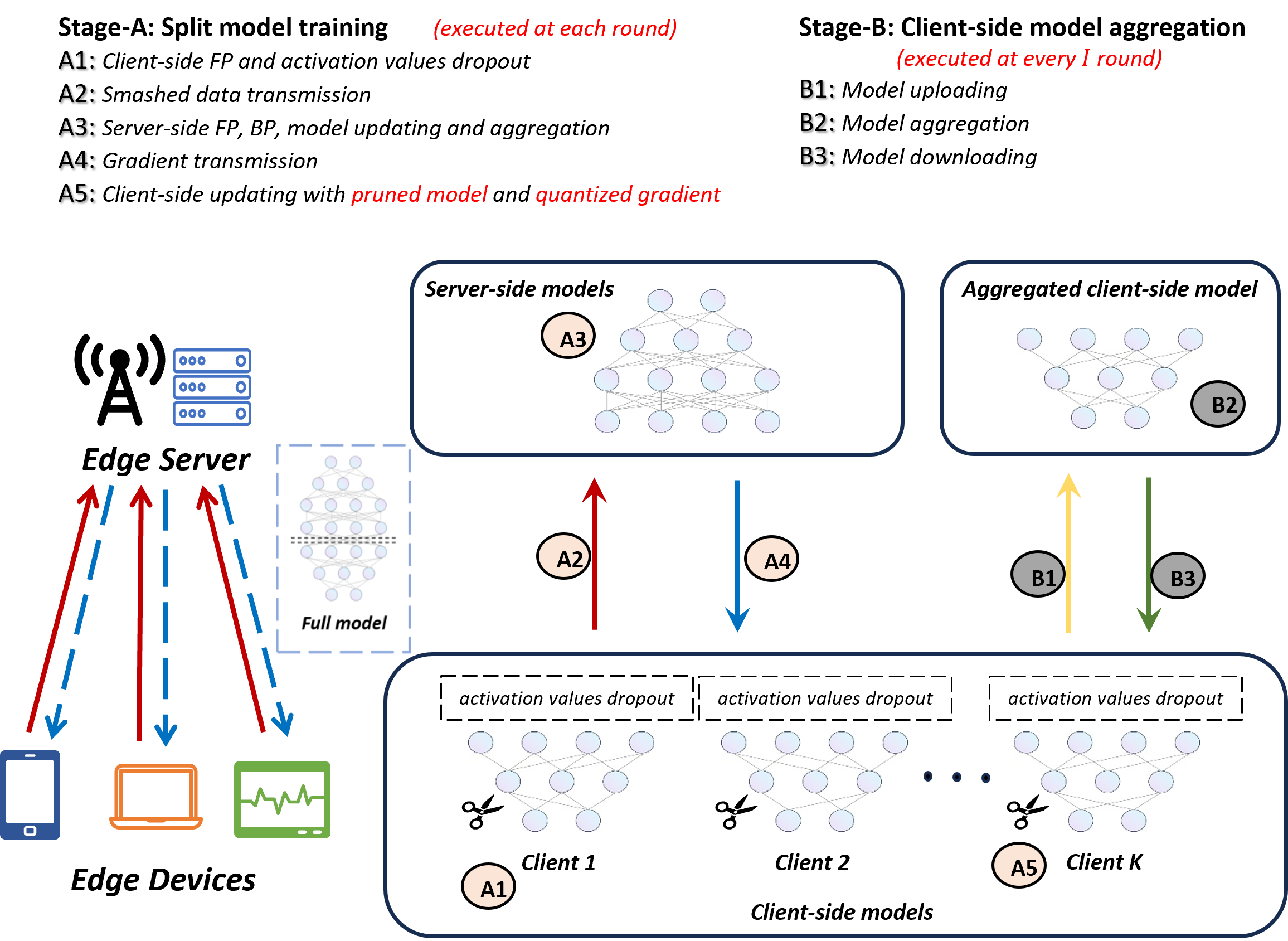}
	\caption{An illustration of the proposed FedSL with client-side model pruning and gradient quantization.}
	\vspace{-3mm}
	\label{fig:sysmodel}
\end{figure}

We consider a wireless network consisting of a set of $\mathcal{K}=\{1, 2,\ldots,K\}$ devices also known as clients and a BS connected with an edge server. The full model $\mathbf{w}$ with $L$ layers is divided into the client-side model $\mathbf{w}_{\mathbf{c},k}$ and the server-side model $\mathbf{w}_{\mathbf{s},k}$.
The client-side model $\mathbf{w}_{\mathbf{c},k}$ retains neurons of layers $\{1, 2,\ldots,L_c\}$ while the server-side model $\mathbf{w}_{\mathbf{s},k}$ retains neurons of layers $\{L_{c}+1, L_{c}+2,\ldots,L\}$, where $L_c$ is the split layer. 

The training objective is to minimize the global loss function, which is given by
\begin{equation}\label{problem1}
\min_{\mathbf{w}}F(\mathbf{w}):=\min_{\mathbf{w}_{c,k};\mathbf{w}_{s,k}}\frac{1}{K}\sum_{k=1}^{K}F(\mathbf{w}_{\mathbf{c},k};\mathbf{w}_{\mathbf{s},k};\mathcal{D}_k),
\end{equation}
where $F(\mathbf{w}_{\mathbf{c},k};\mathbf{w}_{\mathbf{s},k};\mathcal{D}_k)$ is the loss function of client $\mathit{k}$ over local datasets $\mathcal{D}_k$. Due to the identical gradient updates applied in each round for the server-side model, i.e., for every client, their corresponding server-side model remains the same, we use $\mathbf{w}_{\mathbf{s}}$ to replace $\mathbf{w}_{\mathbf{s},k}$ to simplify the expression in most cases.

\subsection{Lightweight FedSL}
As is shown in Fig.~\ref{fig:sysmodel}, we consider a lightweight FedSL system where the server connects with clients via wireless channels. Due to the constrained communication and computing resources, the full model is divided into two parts. Meanwhile, in order to further reduce the computation and communication burden of the clients, each client performs model pruning and gradient quantization, applying dropout to activation values before sending them to the server. Dropout randomly sets a fraction of activation values to zero in each training round, effectively reducing transmitted data and mitigating overfitting. The client-side models are aggregated every $I$ rounds to attain a global model. Each client initializes a binary mask $\mathbf{m}_k$ for pruning before the iterative training.

For the iterative split training stage, each client performs forward propagation and sends the activation values along with corresponding labels to the server after dropout. The server then continues model training, updates the server-side model, and performs server-side model aggregation. Then the clients perform pruning and update their models with quantized gradients. For the client-side model aggregation stage, which is performed every $I$ rounds, all clients upload their models for aggregation to obtain a global model.
Specifically, for each training round $t\in\mathcal{T}=\{1, 2,...,T\}$, five main steps are considered as follows.

\subsubsection{Client-Side Forward Propagation and Activation Values Dropout}
Each client $k \in \mathcal{K}$ performs forward propagation (FP) by computing the activations $\bm{A}_{\mathbf{c},k,t}^{(l)}$ of each layer $l$:
\begin{align}
	\bm{A}_{\mathbf{c},k,t}^{(l)}\!=\!\psi\left(\mathbf{w}_{\mathbf{c},k,t}^{(l)}\bm{A}_{\mathbf{c},k,t}^{(l-1)}+b_{\mathbf{c},k,t}^{(l)}\right)\!,\! \quad \! \forall l \in \{1,\ldots,L_c\},
\end{align}
where $\psi$ is the activation function, $\mathbf{w}_{\mathbf{c},k,t}^{(l)}$ and $b_{\mathbf{c},k,t}^{(l)}$ are the weights and biases of the $l$-th layer, respectively.
To reduce the communication overhead of transmitting intermediate activation values, we apply dropout to the activation values. The dropout process for the $i$-th feature vector ${\bm{a}}_{\mathbf{c},k,t,i}^{(L_c)}$ of the activation value ${\bm{A}}_{\mathbf{c},k,t}^{(L_c)}$ from the split layer $L_c$ can be expressed as
\begin{align}
	\tilde{\bm{a}}_{\mathbf{c},k,t,i}^{(L_c)} = \frac{\delta_i}{1-p_i} {\bm{a}}_{\mathbf{c},k,t,i}^{(L_c)},
\end{align}
where the dropout probability $p_i$ is a is a predefined hyperparameter, $\delta_i \!\! \in \!\! \{0,1\}$ is a Bernoulli random variable with $\mathbb{P}(\delta_i \!\! = \!\! 1) \! = \! (1 \! - \! p_i)$, indicating whether ${\bm{a}}_{\mathbf{c},k,t,i}^{(L_c)}$ is dropped or not.
To preserve the expected output of the whole model, the retained features vectors are multiplied by $\frac{1}{1-p_i}$.
\begin{corollary}\label{dropout}
	The expected value of the feature vector remains invariant after the dropout operation, i.e., $\mathbb{E}[\tilde{\bm{a}}_{\mathbf{c},k,t,i}^{(L_c)}] = \mathbb{E}[{\bm{a}}_{\mathbf{c},k,t,i}^{(L_c)}]$.
\end{corollary}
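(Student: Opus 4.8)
The plan is to compute $\mathbb{E}[\tilde{\bm{a}}_{\mathbf{c},k,t,i}^{(L_c)}]$ directly from its definition, exploiting linearity of expectation together with the independence of the dropout mask $\delta_i$ from the underlying activation ${\bm{a}}_{\mathbf{c},k,t,i}^{(L_c)}$. First I would take the expectation of both sides of the dropout rule, treating the deterministic normalization factor $\frac{1}{1-p_i}$ as a constant that can be pulled outside the expectation. Since $\delta_i$ is the only source of randomness introduced by the dropout step, factoring the expectation of the product reduces the problem to evaluating $\mathbb{E}[\delta_i]$.

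The key step is the elementary observation that, because $\delta_i$ is a Bernoulli variable with $\mathbb{P}(\delta_i = 1) = 1-p_i$, its mean is $\mathbb{E}[\delta_i] = (1-p_i)$. Substituting this back gives $\mathbb{E}[\tilde{\bm{a}}_{\mathbf{c},k,t,i}^{(L_c)}] = \frac{\mathbb{E}[\delta_i]}{1-p_i}\,\mathbb{E}[{\bm{a}}_{\mathbf{c},k,t,i}^{(L_c)}] = \mathbb{E}[{\bm{a}}_{\mathbf{c},k,t,i}^{(L_c)}]$, where the factor $1-p_i$ cancels between numerator and denominator. This cancellation is precisely the purpose of the inverse-keep-probability scaling $\frac{1}{1-p_i}$.

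There is no substantive obstacle in this argument; the statement is an immediate consequence of the definition and of the particular normalization chosen. The only point that merits explicit mention is the independence between $\delta_i$ and ${\bm{a}}_{\mathbf{c},k,t,i}^{(L_c)}$, which licenses the factorization $\mathbb{E}[\delta_i\,{\bm{a}}_{\mathbf{c},k,t,i}^{(L_c)}] = \mathbb{E}[\delta_i]\,\mathbb{E}[{\bm{a}}_{\mathbf{c},k,t,i}^{(L_c)}]$; since the dropout masks are sampled independently of the forward-propagated features, this assumption is satisfied.
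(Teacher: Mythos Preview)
Your proposal is correct and follows essentially the same approach as the paper: both compute the expectation directly from the definition, use that $\delta_i$ is Bernoulli with mean $1-p_i$, and observe the cancellation with the $\frac{1}{1-p_i}$ normalization. The only cosmetic difference is that the paper writes out the two-case sum over $\delta_i\in\{0,1\}$ (effectively conditioning on the activation), whereas you phrase it via independence and the factorization $\mathbb{E}[\delta_i\,{\bm{a}}]=\mathbb{E}[\delta_i]\,\mathbb{E}[{\bm{a}}]$; the content is identical.
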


\begin{proof}
	The expectation of the feature vector $\tilde{\bm{a}}_{\mathbf{c},k,t,i}^{(L_c)}$ after dropout can be expressed as
	\begin{align*}
		&\mathbb{E}[\tilde{\bm{a}}_{\mathbf{c},k,t,i}^{(L_c)}] = \mathbb{E}[\delta_i \cdot \frac{1}{1-p_i} {\bm{a}}_{\mathbf{c},k,t,i}^{(L_c)}] \\
		&=\mathbb{P}(\delta_i=1)\cdot\frac1{1-p_i}{\bm{a}}_{\mathbf{c},k,t,i}^{(L_c)}+\mathbb{P}(\delta_i=0)\cdot0 \\
		&= (1-p_i) \cdot \frac{1}{1-p_i} {\bm{a}}_{\mathbf{c},k,t,i}^{(L_c)} = {\bm{a}}_{\mathbf{c},k,t,i}^{(L_c)}.
		\tag*{\qed}
	\end{align*}
\end{proof}

The smashed data $\bm{S}_{\mathbf{c},k,t}=\big(\tilde{\bm{A}}_{\mathbf{c},k,t}^{(L_c)};\mathcal{H}_k\big)$  including the activation value $\tilde{\bm{A}}_{\mathbf{c},k,t}^{(L_c)}$ and their corresponding label $\mathcal{H}_k$, are transmitted to the server for subsequent training.

\subsubsection{Server-Side Forward Propagation and Backward Propagation}
Upon receiving the smashed data $S_{\mathbf{c},k,t}$ from all clients, the server continues the forward propagation through the remaining layers of the model:
\begin{align}
	\bm{A}_{\mathbf{s},k,t}^{(l)}\!=\!\psi\left(\mathbf{w}_{\mathbf{s},k,t}^{(l)}\bm{A}_{\mathbf{s},k,t}^{(l-1)}+b_{\mathbf{s},k,t}^{(l)}\right)\!, \! \quad \!
	\forall l \in \{L_c\!+\!1,\ldots,L\},
\end{align}
where $\bm{A}_{\mathbf{s},k,t}^{(L_c)} = \tilde{\bm{A}}_{\mathbf{c},k,t}^{(L_c)}$. 
Then, the server calculates the loss based on the difference between final response and the labels uploaded by the client and performs backward propagation to compute the gradients with respect to the server-side weights:
\begin{align}
	\mathbf{g}_{\mathbf{s},k,t}=\nabla F\left(\mathbf{w}_{\mathbf{s},k,t}\right) = \frac{\partial F(\mathbf{w})}{\partial \mathbf{w}_{\mathbf{s},k,t}},
\end{align}
The gradient for activation values $\frac{\partial F(\mathbf{w})}{\partial \tilde{\bm{A}}_{\mathbf{c},k,t}^{(L_c)}}$ are sent to the $k$-th client.
Then, the server aggregates the server-side models at every round after updating each of them. Specifically, the server-side model is updated as 
\begin{equation}\label{ws}
	\mathbf{w}_{\mathbf{s},t+1}= \frac{1}{K} \sum_{k=1}^{K}(\mathbf{w}_{\mathbf{s},k,t}-\eta{\mathbf{g}_{\mathbf{s},k,t}}),
\end{equation}
where $\eta$ is the learning rate.
Note that the server-side model for each client is identical in every round, the expression of server-side model updating can be simplified as
\begin{align}
	\mathbf{w}_{\mathbf{s},t+1} = \mathbf{w}_{\mathbf{s},t} - \frac{\eta}{K} \sum_{k=1}^{K} \mathbf{g}_{\mathbf{s},k,t}.
\end{align}

\subsubsection{Client-Side Model Pruning}
Upon receiving gradients $\mathbf{g}_{\mathbf{s},k,t}$ from the server, each client performs backpropagation to obtain gradients for cliet-side model:
\begin{align}
	\mathbf{g}_{\mathbf{c},k,t}=\nabla F\left(\mathbf{w}_{\mathbf{c},k,t}\right)
	=\frac{\partial F(\mathbf{w})}{\partial \tilde{\bm{A}}_{\mathbf{c},k,t}^{(L_c)}} \cdot \frac{\partial \tilde{\bm{A}}_{\mathbf{c},k,t}^{(L_c)}}{\partial \mathbf{w}_{\mathbf{c},k,t}}.
\end{align}

An importance estimation-based pruning strategy is employed for client-side model pruning \cite{Importance_Estimation}. The importance matrix of weights is defined by estimating the change in loss before and after pruning. The smaller change in loss function when a weight is set to zero, the lower its importance. By removing weights with lower importance, the neural network can be effectively lightened without significant loss in accuracy. Specifically, the importance matrix of the $k$-th client-side
model is calculated as
\begin{align}\label{importance}
	\mathcal{I}_{\mathbf{c},k,t} = |\mathbf{w}^\mathsf{T}_{\mathbf{c},k,t} \mathbf{g}_{\mathbf{c},k,t}|,
\end{align}
which approximate the absolute change of the loss given the removal of $\mathbf{w}_{\mathbf{c},k,t}$. The definition of the importance matrix $\mathcal{I}_{\mathbf{c},k,t}$ is derived from the first-order Taylor expansion of the loss function $F(\cdot)$ with respect to $\mathbf{w}_{\mathbf{c},k,t}$. Unlike pruning schemes based on weight magnitude, pruning based on importance estimation quantifies the impact of weights on the output, which avoids pruning weights with small magnitudes but significantly influence the results \cite{platon}.

Furthermore, we adopt a dynamic pruning scheme based on progressive sparsity during training, rather than a fixed threshold \cite{zhu2017prune}. The binary mask $\mathbf{m}_{k}$ depends on the target sparsity of the current iteration, named pruning rate $\rho_t$. Specifically, setting the initial sparsity $\rho_{0}=0$, the target sparsity in the $t$-th iteration can be denoted as
\begin{equation}
    \rho_t=\rho_f + (\frac{t}{T}-1)^3 \rho_f, \forall t\in \mathcal{T},
\end{equation}
 where $\rho_f$ is the preset final sparsity. This dynamic pruning scheme can rapidly remove redundant weights in the early stages of training and gradually reduce the pruning rate as training stabilizes, thereby enabling the model to converge more effectively.

After clients preform backward propagation, each of them calculates the sparsity of the current weight matrix $\mathbf{w}_{\mathbf{c},k,t}$. If the weight matrix does not reach the target sparsity $\rho_t$, the calculation of the binary mask $\mathbf{m}_{k,t}$ is triggered. The importance matrix of all client-side weights $\mathcal{I}_{\mathbf{c},k,t}$ is computed according to \eqref{importance} and then sorted. Setting the weights with low importance scores to zero to let the pruned weight matrix $\Tilde{\mathbf{w}}_{\mathbf{c},k,t}$ achieve the target sparsity. The pruning is performed as $\Tilde{\mathbf{w}}_{\mathbf{c},k,t}=\mathbf{m}_{k,t} \odot \mathbf{w}_{\mathbf{c},k,t}$ and $\Tilde{\mathbf{g}}_{\mathbf{c},k,t} = \mathbf{m}_{k,t} \odot \mathbf{g}_{\mathbf{c},k,t}$.

\begin{algorithm}[t]
	\LinesNumbered
	\SetAlgoLined
	\caption{Lightweight FedSL Scheme}
	\For{each round $t \in \{1, 2, \dots, T\}$}{
		\For{each client $k \in \{1, 2, \dots, K\}$}{
			Forward propagation to obtain $\bm{A}_{\mathbf{c},k,t}^{(l)}$;\\
			Activation values dropout $\bm{A}_{\mathbf{c},k,t}^{(l)} \rightarrow \tilde{\bm{A}}_{\mathbf{c},k,t}^{(l)}$; \\
			Transmitting smashed data $\bm{S}_{\mathbf{c},k,t}$ to server;
		}
		
		Server performs forward propagation to obtain $\bm{A}_{\mathbf{s},k,t}^{(L)}$; \\
		Loss calculation with $\bm{A}_{\mathbf{s},k,t}^{(L)}$ and $\mathcal{H}_k$; \\
		Calculating gradient $\mathbf{g}_{\mathbf{s},k,t}$, sending gradients of the activation values to client; \\
		Server-side model updating: 
		$\mathbf{w}_{\mathbf{s},t+1} = \mathbf{w}_{\mathbf{s},t} - \frac{\eta}{K} \sum_{k=1}^{K} \mathbf{g}_{\mathbf{s},k,t}$;
		
		
		\For{each client $k \in \{1, 2, \dots, K\}$}{
			Calculating gradient $\mathbf{g}_{\mathbf{c},k,t}$;\\
			\If{$\text{Sparsity}(\mathbf{w}_{\mathbf{c},k,t}) < \rho_t$}{
				Calculating importance matrix: $\mathcal{I}_{\mathbf{c},k,t}=|\mathbf{w}^\mathsf{T}_{\mathbf{c},k,t} \mathbf{g}_{\mathbf{c},k,t}|$; \\
				Calculating pruning mask $\mathbf{m}_{k,t}$; \\
				Model pruning: 
				$\tilde{\mathbf{w}}_{\mathbf{c},k,t}=\mathbf{m}_{k,t} \odot \mathbf{w}_{\mathbf{c},k,t}$ and $\tilde{\mathbf{g}}_{\mathbf{c},k,t} = \mathbf{m}_{k,t} \odot \mathbf{g}_{\mathbf{c},k,t}$;
			}
			Gradient quantization: $\tilde{\mathbf{g}}'_{\mathbf{c},k,t} = Q(\tilde{\mathbf{g}}_{\mathbf{c},k,t})$; \\
			Model updating: $\mathbf{w}_{\mathbf{c},k,t+1}=\tilde{\mathbf{w}}_{\mathbf{c},k,t}-\eta\tilde{\mathbf{g}}'_{\mathbf{c},k,t}$;
		}
		
		
		\If{$T \mid I$}{
			Each client uploads $\mathbf{w}_{\mathbf{c},k,t+1}$ to the server; \\
			Server aggregates received model: $\mathbf{w}_{\mathbf{c},t+1}=\frac1K\sum_{k=1}^K\mathbf{w}_{\mathbf{c},k,t+1}$; \\
			Server transmits $\mathbf{w}_{\mathbf{c},t+1}$ to all clients;
		}
	}
\end{algorithm}

\subsubsection{Client-Side Model Updating}
After obtaining the pruned weight matrix $\Tilde{\mathbf{w}}_{\mathbf{c},k,t}$ and the gradient $\Tilde{\mathbf{g}}_{\mathbf{c},k,t}$, we utilize the gradient quantization method proposed in \cite{amiri2020federated, reisizadeh2020fedpaq} to further reduce the computational complexity of client-side model updating.

For the gradient in $t$-th iteration  $\mathbf{g}_t\in\mathbb{R}^M$. We quantize the gradients $g_{t,m}$ to $g'_{t,m}$ with $q$ bits, where $m\in\{1, 2,..., M\}$. The lower and upper bounds of absolute values of $\mathbf{g}_t$ are respectively denoted as $g_{t,\min}=\min\{|\mathbf{g}_{t,m}|\}$ and $g_{t,\max}=\max\{|\mathbf{g}_{t,m}|\}$. The quantized knob $n_{t,i}$ can be expressed as
\begin{equation}
    n_{t,i}=(g_{t,\min}+u_i\frac{g_{t,\max}-g_{t,\min}}{2^{q}-1}),
\end{equation}
where $u_i\in\{0, 1,...,2^{q}-1\}$.

Thus, the interval $[g_{t,\min}, g_{t,\max}]$ can be divided in to $2^{q}-1$ intervals. For $|g_{t,m}|$ within the interval $N_i=[n_{t, i-1}, n_{t, i})$, it can be quantized as
\begin{equation}\label{B}
    \left.Q(g_{t,m})=\left\{\begin{array}{ll}\operatorname{sign}(g_{t,m})\cdot n_{t,i-1},\!&\text{w.p.}\frac{n_{t,i}-|g_{t,m}|}{n_{t,i}-n_{t,i-1}},\\\operatorname{sign}(g_{t,m})\cdot n_{t,i},&\text{w.p.}\frac{|g_{t,m}|-n_{t,i-1}}{n_{t,i}-n_{t,i-1}},\end{array}\right.\right.
\end{equation}
where w.p. represents for ``with probability". 
Thus, the gradient for the $k$-th client is quantized as $\Tilde{\mathbf{g}}'_{\mathbf{c},k,t}=Q(\Tilde{\mathbf{g}}_{\mathbf{c},k ,t})$, according to \eqref{B}. Then, its model is updated as 
\begin{equation}\label{wc}
    \mathbf{w}_{\mathbf{c},k,t+1}=\Tilde{\mathbf{w}}_{\mathbf{c},k,t}-\eta\Tilde{\mathbf{g}}'_{\mathbf{c},k,t},
\end{equation}
with pruned weight and quantized gradient.

\subsubsection{Client-Side Model Aggregation}
To attain a global model, the client-side models are uploaded to the server for aggregation every $I$ rounds. The aggregated model is then broadcasted to each client as the initial model for the next round. The specific procedures are as follows:
\begin{itemize}
	\item Model Uploading:
	Each client uploads its own client-side model $\mathbf{w}_{\mathbf{c},k,t+1}$ to the server.
	\item Model Aggregation:
	The server aggregates all received models to obtain a global model, denoted as 
	\begin{equation}
		\mathbf{w}_{\mathbf{c},t+1}=\frac1K\sum_{k=1}^K\mathbf{w}_{\mathbf{c},k,t+1}.
	\end{equation}
	\item Model Downloading:
	After completing the client-side model aggregation, the server transmits the aggregated model $\mathbf{w}_{\mathbf{c},t+1}$ to all clients.
\end{itemize}

Due to the varying structures of client-side models after pruning, the sparsity of the aggregated model may not meet the requirement $\rho_t$ for the current round. However, clients do not perform pruning upon receiving the aggregated model immediately. Instead, they start the next round of training and re-evaluate the importance of the aggregated weights.


\section{Convergence Analysis}\label{con_ana}
In this section, we provide the convergence analysis of the proposed lightweight FedSL by characterizing the effect of aggregation frequency, pruning rate, quantization accuracy and split layer selection.
We make several assumptions on the loss functions, model weights and gradients as follows.
\begin{assumption}\label{asp1}
	\textit{The loss function $F(\mathbf{w})$ is differentiable and $\beta$-smooth.}
	\begin{equation}
		\left\|{\nabla F(\mathbf{w}_1)-\nabla F(\mathbf{w}_2)}\right\| \le \beta\left\|{\mathbf{w}_1-\mathbf{w}_2}\right\|,\forall \mathbf{w}_1, \forall \mathbf{w}_2.
	\end{equation}
\end{assumption}

\begin{assumption}\label{asp2}
	\textit{The stochastic gradients are unbiased.}
	\begin{equation}
		\mathbb{E}[\mathbf{g}(\mathbf{w})]=\nabla F(\mathbf{w}),\forall \mathbf{w}.
	\end{equation}
\end{assumption}

\begin{assumption}\label{asp3}
	\textit{(Bounded variance).} \textit{The variance of stochastic gradients has an upper bound.}
	\begin{equation}
		\mathbb{E} \Vert \mathbf{g}(\mathbf{w})-\nabla F(\mathbf{w})\Vert^2 \leq \sum\limits_{l=1}^L \sigma_l^2, \forall \mathbf{w}.
	\end{equation}
\end{assumption}

\begin{assumption}\label{asp4}
	\textit{(Bounded gradient and weight).} \textit{The second moments of the stochastic gradients and weights have upper bound.}
	\begin{equation}
		\mathbb{E} \Vert \mathbf{g}(\mathbf{w}) \Vert^2 \leq \sum\limits_{l=1}^L G_l^2,~
		\mathbb{E} \Vert \mathbf{w} \Vert^2 \leq \sum\limits_{l=1}^L W_l^2,~
		\forall \mathbf{w}.
	\end{equation}
\end{assumption}

\begin{assumption}\label{asp5}
	\textit{According to \cite{stich2018sparsified}, the model error caused by pruning operations is denoted as}
	\begin{equation}
		\mathbb{E} \Vert \mathbf{w}-\Tilde{\mathbf{w}} \Vert^2 \leq \rho \mathbb{E} \Vert \mathbf{w} \Vert^2 \leq \rho \sum\limits_{l=1}^L W_l^2, \forall \mathbf{w},
	\end{equation}
where $\rho$ denotes the pruning rate.
\end{assumption}

\begin{assumption}\label{asp6}
	\textit{Let $\Delta_g = \sqrt{\frac{M}{4}(g_{\max}-g_{\min})^{2}}$, the quantized gradient satisfies \cite{Wang_2022}}
	\begin{align}
		&\mathbb{E}||Q(\nabla F(\mathbf{w}))-\nabla F(\mathbf{w})||^2 \leq \left(\frac{\Delta_g}{2^{q}-1}\right)^2=\sum \limits_{l=1}^L J_{l}^2, \nonumber \\
		&\mathbb{E}[Q(\nabla F(\mathbf{w}))]=\nabla F(\mathbf{w}), \forall \mathbf{w},
	\end{align}
where $q$ denotes the quantized bits.
\end{assumption}

\begin{lemma}\label{rhobound}
	For the target sparsity (pruning rate) for each round $\rho_t=\rho_f + (\frac{t}{T}-1)^3 \rho_f$, we have $\rho_t \leq \rho_f$. Over $T$ rounds, the bound of the sum of pruning rate $\rho_t$ can be represented as 
	$\sum\limits_{t=1}^{T} \rho_t< T \rho_f$.
\end{lemma}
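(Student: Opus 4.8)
The plan is to treat the two assertions separately, since both reduce to elementary facts about the cubic term $\bigl(\frac{t}{T}-1\bigr)^3$.

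First I would establish the pointwise bound $\rho_t \le \rho_f$. Writing $\rho_t = \rho_f\bigl[1 + (\frac{t}{T}-1)^3\bigr]$, I observe that for every $t \in \{1,\ldots,T\}$ the argument satisfies $\frac{t}{T}-1 \in (-1,0]$. Since the cube function is strictly increasing and sends nonpositive inputs to nonpositive outputs, we get $(\frac{t}{T}-1)^3 \le 0$, whence $\rho_t = \rho_f + \rho_f(\frac{t}{T}-1)^3 \le \rho_f$, with equality only at $t=T$.

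Next I would establish the summed bound. Summing the defining expression and factoring out $\rho_f$ gives
\[
\sum_{t=1}^{T}\rho_t = \rho_f T + \rho_f \sum_{t=1}^{T}\Bigl(\frac{t}{T}-1\Bigr)^{3},
\]
so it suffices to show the residual sum $S := \sum_{t=1}^{T}(\frac{t}{T}-1)^3$ is strictly negative. I would reindex by $j = T-t$, which turns $S$ into $-\frac{1}{T^3}\sum_{j=1}^{T-1} j^3 = -\frac{1}{T^3}\bigl(\frac{(T-1)T}{2}\bigr)^2 = -\frac{(T-1)^2}{4T}$, using the closed form $\sum_{j=1}^{n} j^3 = (\frac{n(n+1)}{2})^2$. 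This is $<0$ for $T \ge 2$, yielding $\sum_{t=1}^{T}\rho_t < T\rho_f$. Alternatively, and without the closed form, I can simply note that every summand with $t<T$ is strictly negative while the $t=T$ term vanishes, so $S<0$ directly.

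There is essentially no hard step here; the only point requiring mild care is the \emph{strictness} of the second inequality. The bound is strict precisely because at least one summand (e.g.\ the one at $t=1$) is strictly negative, which holds as soon as $T \ge 2$ — the regime relevant to training. In the degenerate case $T=1$ the single term vanishes and one obtains equality, so the strict statement is to be read for $T\ge 2$.
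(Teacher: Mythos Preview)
Your argument is correct. The paper actually states this lemma without proof, treating both claims as immediate consequences of the definition, so your write-up is already more detailed than anything appearing there. Your observation about the $T=1$ edge case (where the strict inequality degenerates to equality) is a nice touch that the paper does not address.
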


\begin{lemma}\label{lemma}
Under the Assumption 4 and 5, we have
\begin{align*}
	\mathbb{E} \Vert \mathbf{w}_{\mathbf{c},t}\!-\!\Tilde{\mathbf{w}}_{\mathbf{c},k,t} \Vert^2 \!\!\leq\! 8\eta^{2}(I\!+\!1)^{2}\!\sum\limits_{l=1}^{L_c}\!G_l^2
	\!\!+\!4\!\sum\limits_{l=1}^L \!W_l^2 \!\!+\!2\rho_{t}\!\sum\limits_{l=1}^{L_c}\!W_l^2\!.
\end{align*}
\end{lemma}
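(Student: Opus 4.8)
The plan is to control the distance between the aggregated client-side model $\mathbf{w}_{\mathbf{c},t}$ and the local pruned model $\tilde{\mathbf{w}}_{\mathbf{c},k,t}$ by separating the error into a \emph{pruning} part and a \emph{local-drift} part. Concretely, I would first insert the unpruned local iterate $\mathbf{w}_{\mathbf{c},k,t}$ and apply $\|a+b\|^2 \le 2\|a\|^2 + 2\|b\|^2$ to write
\begin{equation*}
\mathbb{E}\|\mathbf{w}_{\mathbf{c},t} - \tilde{\mathbf{w}}_{\mathbf{c},k,t}\|^2 \le 2\,\mathbb{E}\|\mathbf{w}_{\mathbf{c},t} - \mathbf{w}_{\mathbf{c},k,t}\|^2 + 2\,\mathbb{E}\|\mathbf{w}_{\mathbf{c},k,t} - \tilde{\mathbf{w}}_{\mathbf{c},k,t}\|^2.
\end{equation*}
The second term is exactly the pruning error at round $t$, so Assumption~5 applied to the client-side layers $1,\dots,L_c$ bounds it by $2\rho_t\sum_{l=1}^{L_c}W_l^2$, which is already the last term of the claim.

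The remaining work is the drift term $2\,\mathbb{E}\|\mathbf{w}_{\mathbf{c},t} - \mathbf{w}_{\mathbf{c},k,t}\|^2$. The key structural fact is that client-side aggregation occurs every $I$ rounds, so if $t_0$ denotes the most recent aggregation round then $t-t_0 \le I$ and all clients shared the same weights $\mathbf{w}_{\mathbf{c},t_0}$ at that point. I would unroll the client update $\mathbf{w}_{\mathbf{c},k,\tau+1} = \tilde{\mathbf{w}}_{\mathbf{c},k,\tau} - \eta\tilde{\mathbf{g}}'_{\mathbf{c},k,\tau}$ back to $t_0$, expressing the accumulated difference as a sum of at most $I+1$ one-step increments, each consisting of a quantized–pruned gradient step and a pruning correction.

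Then I would apply the Cauchy–Schwarz bound $\|\sum_{i=1}^{n} x_i\|^2 \le n\sum_{i=1}^n\|x_i\|^2$ to this sum of $\le I+1$ increments (this is what produces the $(I+1)^2$ factor). Each gradient increment is controlled in second moment by Assumption~4, using that pruning and the unbiased quantizer of Assumption~6 do not enlarge the expected squared norm, so that $\mathbb{E}\|\tilde{\mathbf{g}}'_{\mathbf{c},k,\tau}\|^2$ is absorbed into $\sum_{l=1}^{L_c}G_l^2$; after collecting the $\eta^2$ and the leading constants this yields the $8\eta^2(I+1)^2\sum_{l=1}^{L_c}G_l^2$ term. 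The residual weight contributions, obtained by bounding the aggregated and local iterates directly through the weight bound $\mathbb{E}\|\mathbf{w}\|^2 \le \sum_{l=1}^L W_l^2$ of Assumption~4 together with $\rho_\tau \le \rho_f \le 1$ from Lemma~\ref{rhobound}, collapse into the $4\sum_{l=1}^L W_l^2$ term.

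I expect the main obstacle to be the constant bookkeeping in the drift term: correctly counting the (at most) $I+1$ accumulated local steps, tracking the pruning correction injected at every one of them alongside the quantization noise, and keeping the cross terms controlled so that the factors $8$, $(I+1)^2$, and $4$ emerge exactly as stated. A subtle point is the layer-index mismatch — the gradient increments live only on the client layers $1,\dots,L_c$, whereas the weight term is bounded loosely over all $L$ layers via $\sum_{l=1}^{L_c}W_l^2 \le \sum_{l=1}^L W_l^2$ — and one must verify that invoking the unbiasedness of the quantizer (Assumption~6) is legitimate, i.e. that the pruning and quantization randomness is independent of the accumulated iterate when the conditional expectations are taken.
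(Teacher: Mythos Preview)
Your proposal is correct and follows essentially the same route as the paper's proof: the same initial splitting via $\mathbf{w}_{\mathbf{c},k,t}$, the same unrolling back to the last aggregation round $t_0$ with $t-t_0\le I$, and the same use of the inequality $\|\sum_{i=1}^n z_i\|^2\le n\sum_i\|z_i\|^2$ on the accumulated gradient terms together with Assumptions~4 and~5 to produce the three constants. The one device the paper uses that you do not mention explicitly is the mask-nesting identity $\mathbf{m}_{k,t_0}\odot\cdots\odot\mathbf{m}_{k,t}=\mathbf{m}_{k,t}$, which yields closed-form expressions for $\mathbf{w}_{\mathbf{c},k,t}$ and $\mathbf{w}_{\mathbf{c},t}$ and isolates the $4\sum_{l=1}^{L}W_l^2$ contribution cleanly as $4\,\mathbb{E}\|(\mathbf{m}_t-\mathbf{m}_{k,t})\odot\mathbf{w}_{\mathbf{c},t}\|^2$.
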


\begin{proof}
See Appendix A.
\nolinebreak\hspace*{\fill}~\qed\par\unskip\vskip3pt
\end{proof}

\begin{theorem}
Under the  Assumptions 1-6 and Lemma 1-2, if $0 < \eta \leq \frac{1}{2\beta}$, then for all $T \geq 1$, we have
\begin{align}\label{theorem}
    & \frac{1}{T}\sum\limits_{t=1}^{T}\mathbb{E} \Vert \nabla F(\mathbf{w}_t) \Vert^2 \nonumber \\ 
    & < \frac{2 \vartheta}{\eta T} + \sum\limits_{l=1}^{L} (\frac{\beta \eta}{K}\sigma_l^2 + \frac{1}{\eta}G_l^2 + \frac{4(4\beta^{2}+1)}{\eta}W_l^2) \nonumber \\
    &\overbrace{
    +\sum\limits_{l=1}^{L_c}(\frac{\beta \eta}{K}\sigma_l^2 + \underbrace{\frac{(4 \beta^2 + 1)(8\eta^{2} {(I+1)^2} + 1)}{\eta}  G_l^2}_{\rm effect~of ~aggregation ~frequency}}^{\rm effect~of~split~layer~selection} \nonumber \\ 
    &+ \underbrace{\frac{\rho_{f}(4K\beta^2+K+\beta)}{K\eta}W_l^2}_{\rm effect ~of ~pruning ~rate} + \frac{4}{\eta} J_l^2),
\end{align}
where $\vartheta=F(\mathbf{w}_1)-F(\mathbf{w}*)$, $F(\mathbf{w}*)$ represents the minimum value of the loss function.
\end{theorem}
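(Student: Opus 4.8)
The plan is to run the standard non-convex descent argument while carefully isolating the three perturbations that distinguish this scheme from vanilla SGD: gradient quantization, weight pruning, and the client drift caused by aggregating the client-side model only every $I$ rounds. I would reason about the virtual global iterate $\mathbf{w}_t=(\mathbf{w}_{\mathbf{c},t};\mathbf{w}_{\mathbf{s},t})$ with $\mathbf{w}_{\mathbf{c},t}=\frac{1}{K}\sum_{k=1}^{K}\mathbf{w}_{\mathbf{c},k,t}$, the per-round average of the (pruned, quantized) client models, together with the server model $\mathbf{w}_{\mathbf{s},t}$ that is synchronized every round by \eqref{ws}.

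First I would apply $\beta$-smoothness (Assumption 1) to obtain the one-step descent inequality
\begin{equation*}
\mathbb{E}[F(\mathbf{w}_{t+1})] \le F(\mathbf{w}_t) + \mathbb{E}\langle \nabla F(\mathbf{w}_t),\mathbf{w}_{t+1}-\mathbf{w}_t\rangle + \frac{\beta}{2}\mathbb{E}\|\mathbf{w}_{t+1}-\mathbf{w}_t\|^2,
\end{equation*}
and then decompose the increment $\mathbf{w}_{t+1}-\mathbf{w}_t$ into a genuine gradient step $-\eta\,\nabla F(\mathbf{w}_t)$, the pruning displacement $\tilde{\mathbf{w}}_{\mathbf{c},k,t}-\mathbf{w}_{\mathbf{c},t}$ on the client layers, and the zero-mean noise from the stochastic gradients, the quantizer, and dropout. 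Taking expectations and using unbiasedness of the stochastic gradients (Assumption 2), of the quantizer (Assumption 6), and of dropout (Corollary 1), the noise terms vanish from the inner product, leaving $-\eta\|\nabla F(\mathbf{w}_t)\|^2$ plus a cross term arising from the pruning displacement.

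Second I would bound the two residual contributions. The pruning cross term is controlled with Young's inequality, after which the weight norms are bounded by Assumption 4 and the aggregation-interval drift $\mathbb{E}\|\mathbf{w}_{\mathbf{c},t}-\tilde{\mathbf{w}}_{\mathbf{c},k,t}\|^2$ is controlled directly by Lemma 2 — this is exactly what injects the $(I+1)^2$ dependence and the $\sum_{l=1}^{L_c} G_l^2$ factor attributed to the split-layer and aggregation-frequency effects. For the quadratic term $\mathbb{E}\|\mathbf{w}_{t+1}-\mathbf{w}_t\|^2$ I would split the server and client blocks, bounding the gradient variance by Assumption 3, the second moments by Assumption 4, the quantization error by the $J_l^2$ bound in Assumption 6, and the pruning error by Assumption 5, whose per-round $\rho_t$ will be summed later. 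Crucially, the condition $\eta\le\frac{1}{2\beta}$ makes the net coefficient of $\|\nabla F(\mathbf{w}_t)\|^2$ at least $\frac{\eta}{2}$, since $\eta(1-\beta\eta)\ge \eta/2$, which is precisely what produces the factor $2$ in $\frac{2\vartheta}{\eta T}$ after rearrangement.

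Finally I would isolate $\frac{\eta}{2}\|\nabla F(\mathbf{w}_t)\|^2$, sum over $t=1,\dots,T$ so that the function-value terms telescope to $F(\mathbf{w}_1)-F(\mathbf{w}^*)=\vartheta$, replace $\sum_t \rho_t$ by its bound $T\rho_f$ via Lemma 1 (which converts the per-round pruning error into the $\rho_f$-dependent term), and divide by $\frac{\eta}{2}T$ to reach \eqref{theorem}. The main obstacle I anticipate lies in the pruning displacement: unlike quantization and dropout, pruning is a \emph{biased} perturbation of the weights rather than of the gradients, so it cannot be eliminated in expectation and instead couples with the multi-round client drift. Keeping these two effects cleanly separated — absorbing the drift entirely into Lemma 2 while charging the residual bias to the weight-norm budget of Assumptions 4 and 5 — is the delicate bookkeeping step on which the whole bound hinges.
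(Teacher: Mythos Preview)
Your overall plan matches the paper's proof: start from the $\beta$-smoothness descent inequality on the virtual iterate $\mathbf{w}_t=(\mathbf{w}_{\mathbf{c},t};\mathbf{w}_{\mathbf{s},t})$, split the increment into server and client blocks, use the unbiasedness assumptions to clear the stochastic, quantization, and dropout noise from the inner product, invoke Lemma~\ref{lemma} for the client-side drift, and finish by telescoping and replacing $\sum_t\rho_t$ via Lemma~\ref{rhobound}.

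There is, however, one step that does not work as you have written it. When you say the noise terms vanish from the inner product ``leaving $-\eta\|\nabla F(\mathbf{w}_t)\|^2$ plus a cross term arising from the pruning displacement,'' you are implicitly assuming $\mathbb{E}[\tilde{\mathbf{g}}'_{\mathbf{c},k,t}]=\nabla F(\mathbf{w}_{\mathbf{c},t})$. But the client gradient is computed at the \emph{local pruned} model, so after conditioning out the SGD, quantizer, and dropout randomness one only obtains $\mathbb{E}[\tilde{\mathbf{g}}'_{\mathbf{c},k,t}]=\nabla F(\tilde{\mathbf{w}}_{\mathbf{c},k,t})$. This leaves a second, non-zero-mean residual $\nabla F(\mathbf{w}_{\mathbf{c},t})-\frac{1}{K}\sum_k\nabla F(\tilde{\mathbf{w}}_{\mathbf{c},k,t})$ in the inner product that is \emph{not} the weight displacement $\tilde{\mathbf{w}}_{\mathbf{c},k,t}-\mathbf{w}_{\mathbf{c},t}$ you already budgeted for. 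The paper handles this by applying the polarization identity $\langle a,b\rangle=\tfrac12(\|a\|^2+\|b\|^2-\|a-b\|^2)$ to the inner product (Eqns.~\eqref{ff} and~\eqref{hh}); this simultaneously (i) produces the term $\tfrac{\eta}{2}\|\nabla F(\mathbf{w}_{\mathbf{c},t})-\tfrac{1}{K}\sum_k\nabla' F(\tilde{\mathbf{w}}_{\mathbf{c},k,t})\|^2$, which is then controlled by $\beta$-smoothness together with a \emph{second} application of Lemma~\ref{lemma} (Eqn.~\eqref{ll}), and (ii) generates a negative $-\tfrac{\eta}{2}\|\tfrac{1}{K}\sum_k\nabla' F(\tilde{\mathbf{w}}_{\mathbf{c},k,t})\|^2$ that cancels the matching positive contribution $\beta\eta^2\|\cdot\|^2$ coming from the quadratic term in \eqref{term3}. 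This cancellation is the actual role of the hypothesis $\eta\le\tfrac{1}{2\beta}$ here --- not the $\eta(1-\beta\eta)\ge\eta/2$ mechanism you describe, which would only require $\eta\le 1/\beta$. Your plan has all the ingredients, but as written it invokes Lemma~\ref{lemma} only for the weight displacement; you need it a second time, via Assumption~\ref{asp1}, to control the gradient-evaluation mismatch.
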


\begin{proof}
	See Appendix B.
	\nolinebreak\hspace*{\fill}~\qed\par\unskip\vskip3pt
\end{proof}

\begin{remark}
    Theorem 1 reveals the impact of the split layer $L_c$, aggregation frequency $I$, preset pruning rate $\rho_f$, and quantization precision $J_l^2$ on the optimum gap after $T$ rounds. Specifically, a shallower split layer $L_c$ reduces the model size for compression and periodic aggregation, leading to faster convergence rate. Additionally, more frequent aggregation (i.e., smaller $I$) is more conducive to the convergence of the global model.
\end{remark}

\section{Simulation Results}
To evaluate the performance of our proposed lightweight FedSL framework, we compare the lightweight FedSL with the traditional FedSL, which does not involve any model compression and performs model aggregation in each round. We consider $K$ clients and one server collaboratively training the VGG-19 model on CIFAR-10 dataset. 
The distance $d$ between clients and the server varied between $100$ to $300$ meters, with transmission power set at $23$ dBm for clients and $37$ dBm for the server, and the subchannel bandwidth is $B=5$ MHz. The power of Gaussian white noise is $-174$ dBm/Hz, while the path loss model is $128.1+37.6\log_{10}(d)$.

Fig.~\ref{fig:PQ} is the ablation experiments for the preset pruning rate $\rho_f$ and quantized bits $q$ with the number of clients and split layer are uniformly set to  $K=5$ and $L_c=8$, respectively. Neither periodic aggregation nor dropout processes are included.
Fig.~\ref{R} illustrates that moderate pruning can lead to faster convergence rate and higher test accuracy. Due to the significant parameter redundancy in the VGG-19 model relative to the CIFAR-10 dataset, pruning less important weights simplifies the network structure, allowing the model to focus on learning critical features. However, when the pruning rate is excessively high (e.g., $\rho_f=0.7$), substantial divergence emerge among client-side models. The removal of crucial features diminishes the consistency of the global model's learning process, resulting in notable fluctuations in the accuracy curve, which indicates performance instability after aggregation. 
Fig.~\ref{Q} illustarates the effect of quantization on model performance. Low-bit quantization (e.g., $4$ bits) results in a clear drop in accuracy, primarily because the substantial loss of gradient information impedes precise parameter updates. In contrast, using an $8$-bit quantization slightly enhances performance by introducing a controlled level of noise, which helps mitigate overfitting as a form of regularization.

Fig.~\ref{fig:ILKP} illustrates the impact of different aggregation frequencies, split layer selection, the number of clients and the dropout rate on model performance under the effects of pruning and quantization. 
Fig.~\ref{I} shows that periodic aggregation comes at the cost of reduced convergence accuracy for the global model since the compression process affects the model structure of each client. When the aggregation interval is $I=1$, the regularization effect of pruning and quantization leads to improved model accuracy. In contrast, when the aggregation intervals $I$ are set to $5$ and $10$, less frequent aggregation results in lower convergence accuracy.
Fig.~\ref{L} illustrates that, under the combined effects of model compression, periodic aggregation and dropout, selecting shallower split layers (e.g., $L_c=4$ or $L_c=8$) can accelerate convergence speed. Additionally, shallower split layers demonstrate stronger robustness against dropout. Although the shallower split layers produce more activation values transmitted through wireless channel, the dropout operation can eliminate redundant information, saving communication resources without significant loss in accuracy.
Moreover, the robustness can also be attributed to the greater capacity of the server-side model to compensate for the missing activations during the forward pass. This compensatory effect helps maintain a stable convergence process despite the reduced data transmission.
For deeper split layers (e.g., $L_c=12$), the dimensions extracted by the convolutional layers are of higher order, and randomly dropping $30\%$ of the activation values introduces greater instability, leading to larger errors on certain batches and a sudden drop in testing accuracy. However, despite the performance fluctuations caused by dropout, the model gradually learns to adapt to this instability over sufficient iterations.
Fig.~\ref{K} illustrates the impact of the number of clients on the global model performance. As the number of clients increases, the convergence of the global model deteriorates. Although lightweighting techniques reduce computational and communication overhead, they also introduce approximation errors. The errors from multiple clients accumulate during aggregation as the number of clients grows, which compromises the convergence of the global model. This result suggests the need for developing lightweighting methods that are independent of the number of clients, enabling scalable deployment of AI models in wireless networks.
Fig.~\ref{P} illustrates the impact of dropout rate on model performance when the split layer is $L_c=8$. The result shows that when the split layer is appropriately chosen, even a relatively high dropout rate (e.g., $p_i=0.7$) only leads to a reduction in convergence rate, without causing a sudden performance drop on specific batches, which is observed in Fig.~\ref{L} when the split layer is $L_c=12$ and the dropout rate is $p_i=0.3$.

\begin{figure}[t]
	\setlength\abovecaptionskip{3pt}
	\centering
	\subfigure[Impact of pruning.]{
		\includegraphics[height=3.7cm,width=4.1cm]{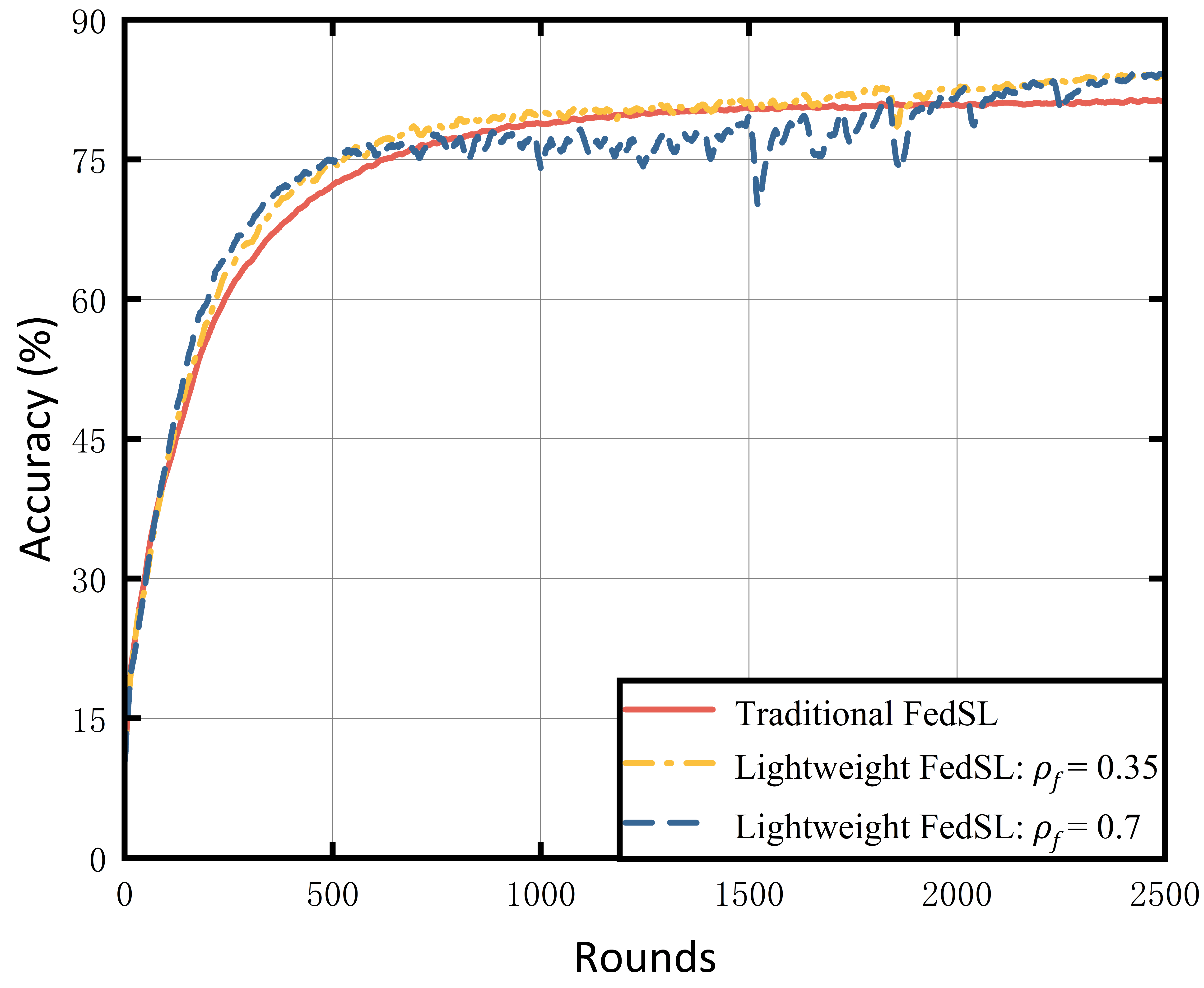}
		\label{R}
	}
	\subfigure[Impact of quantization.]{
		\includegraphics[height=3.7cm,width=4.1cm]{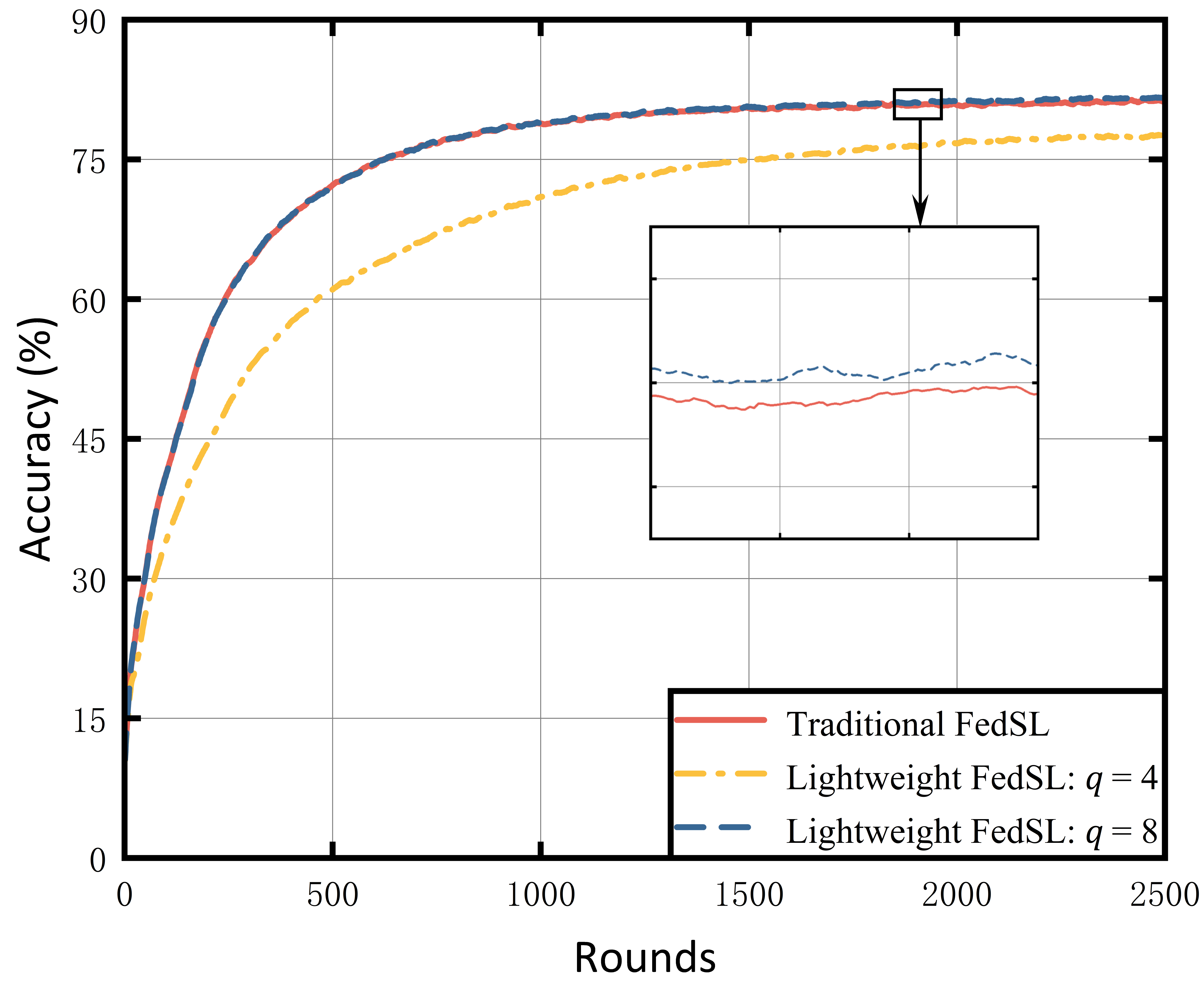}
		\label{Q}
	}
	\caption{Impact of pruning and quantization on performance without periodic aggregation and dropout.}
	\label{fig:PQ}
	\vspace{-2ex}
\end{figure}

Fig.~\ref{T} illustrates the impact of dropout rate on communication latency with varying split layers. Shallower split layers generate a substantial amount of activation values, which can be a major source of communication overhead, especially over bandwidth-constrained wireless channels. Dropout plays a crucial role in significantly reducing the number of activations that need to be transmitted, thereby lowering the communication latency.
Deeper split layers generate fewer activation values, naturally resulting in lower communication costs. However, deeper split layers require careful tuning of the dropout rate, as they are more susceptible to performance degradation due to the higher order features and the reduced resilience to the loss of neural connections.

\begin{figure*}[t]
	\setlength\abovecaptionskip{3pt}
	\centering
	\subfigure[Impact of aggregation frequency with $L_c=8$, $K=5$ and $p_i=0.3$.]{
		\includegraphics[scale=0.18]{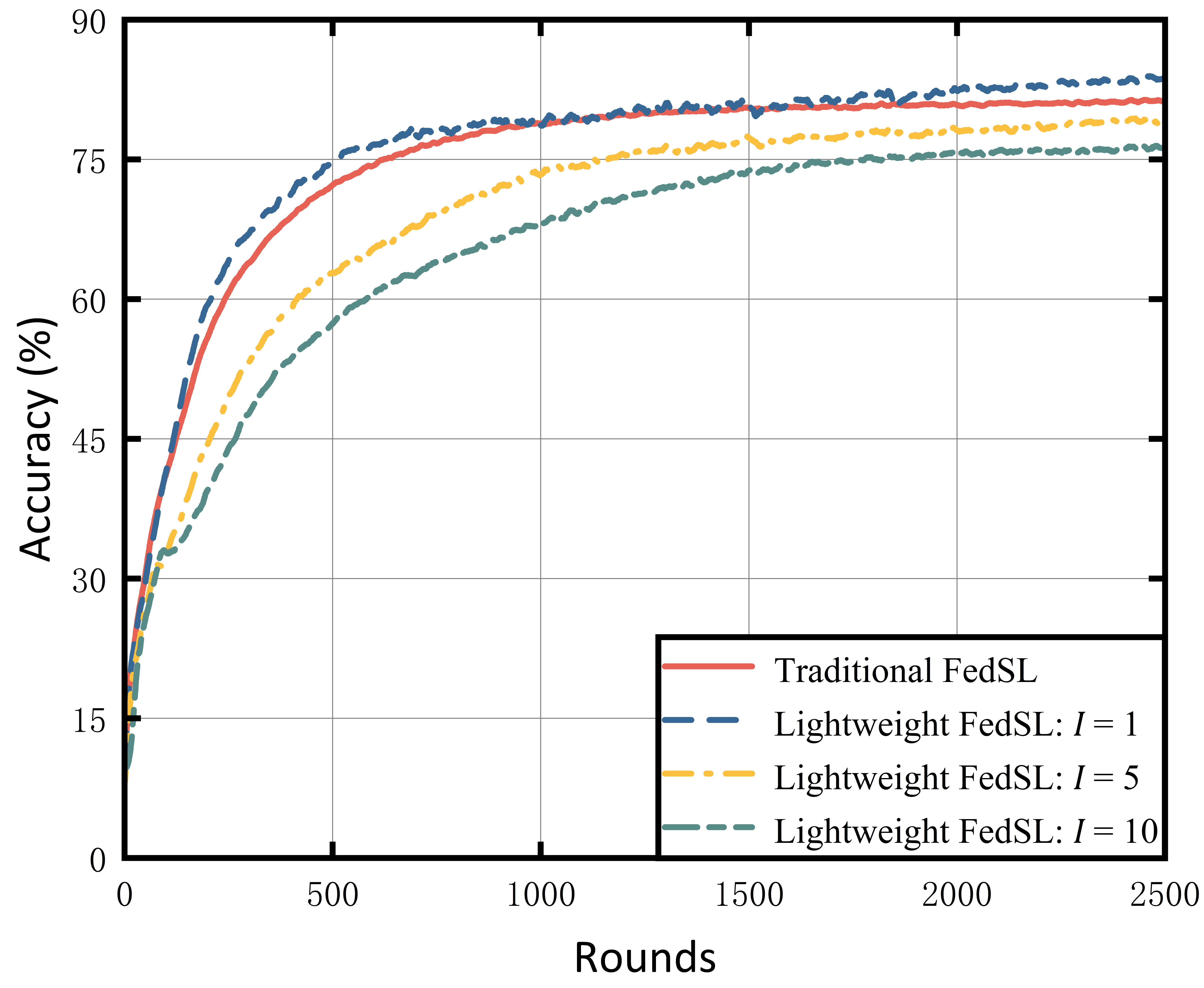}
		\label{I}
	}
	\quad
	\subfigure[Impact of split layer selection with $I=5$, $K=5$ and $p_i=0.3$.]{
		\includegraphics[scale=0.18]{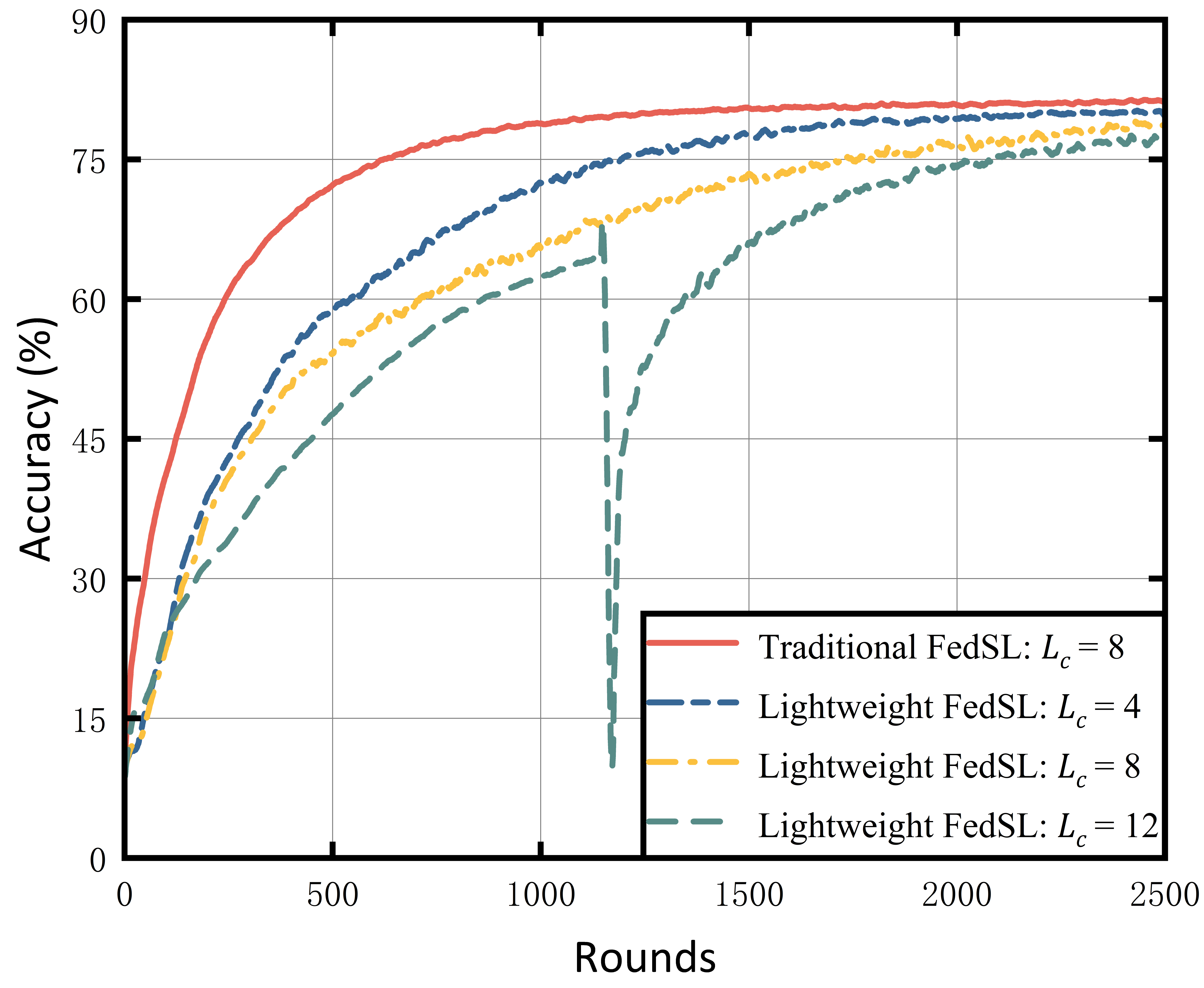}
		\label{L}
	}
	\quad
	\subfigure[Impact of number of clients with $L_c=8$, $I=5$ and $p_i=0.3$.]{
		\includegraphics[scale=0.18]{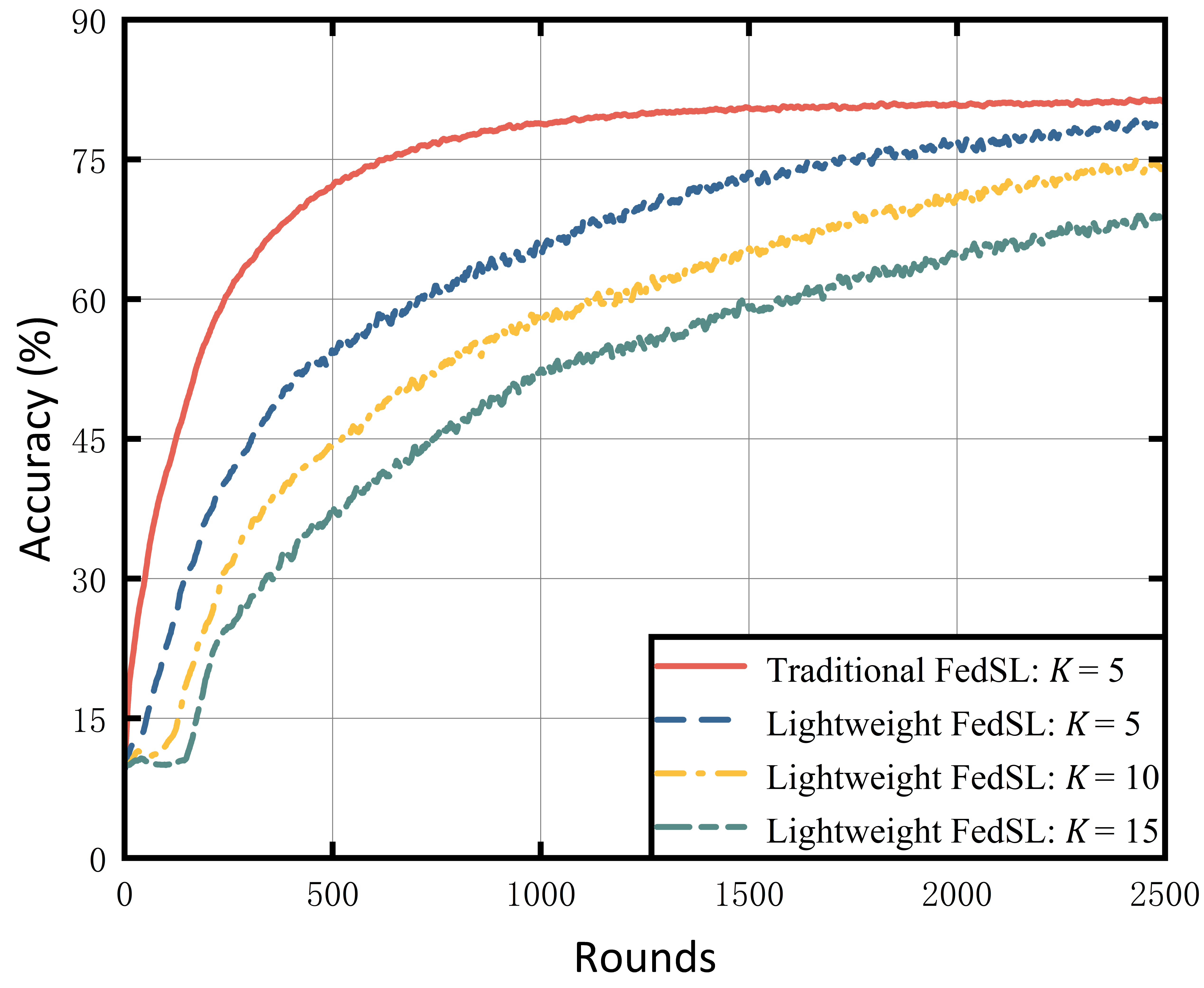}
		\label{K}	
	}
	\quad
	\subfigure[Impact of dropout rate with $L_c=8$, $I=5$ and $K=5$.]{
		\includegraphics[scale=0.18]{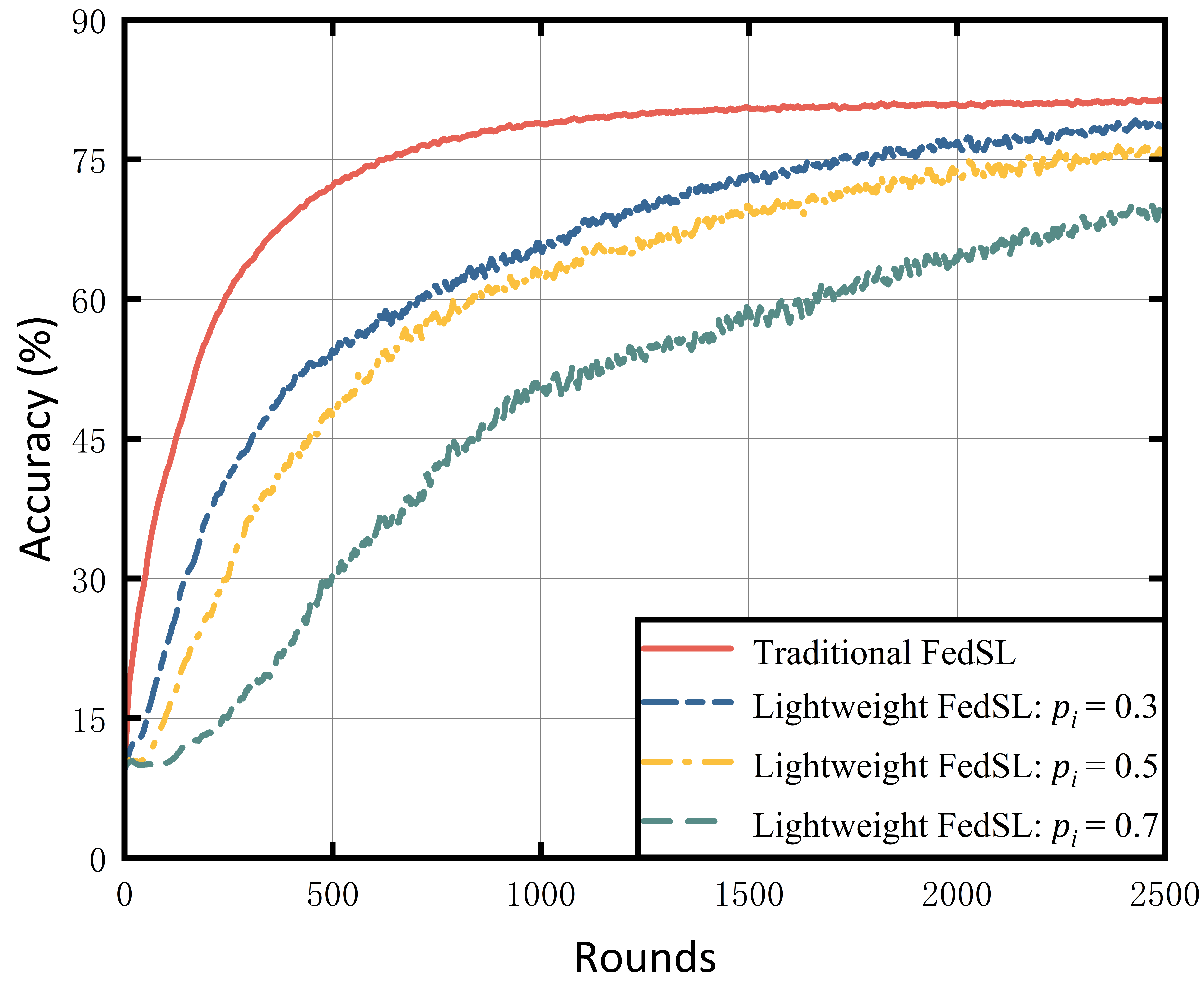}
		\label{P}	
	}
	\caption{Impact of aggregation frequency $I$, split layer selection $L_c$, number of clients $K$ and dropout rate $p_i$ on performance with pruning rate $\rho_f = 0.35$ and quantized bits $q=8$.}
	\label{fig:ILKP}
	\vspace{-2ex}
\end{figure*}

\begin{figure}
	\centering
	\includegraphics[scale=0.2]{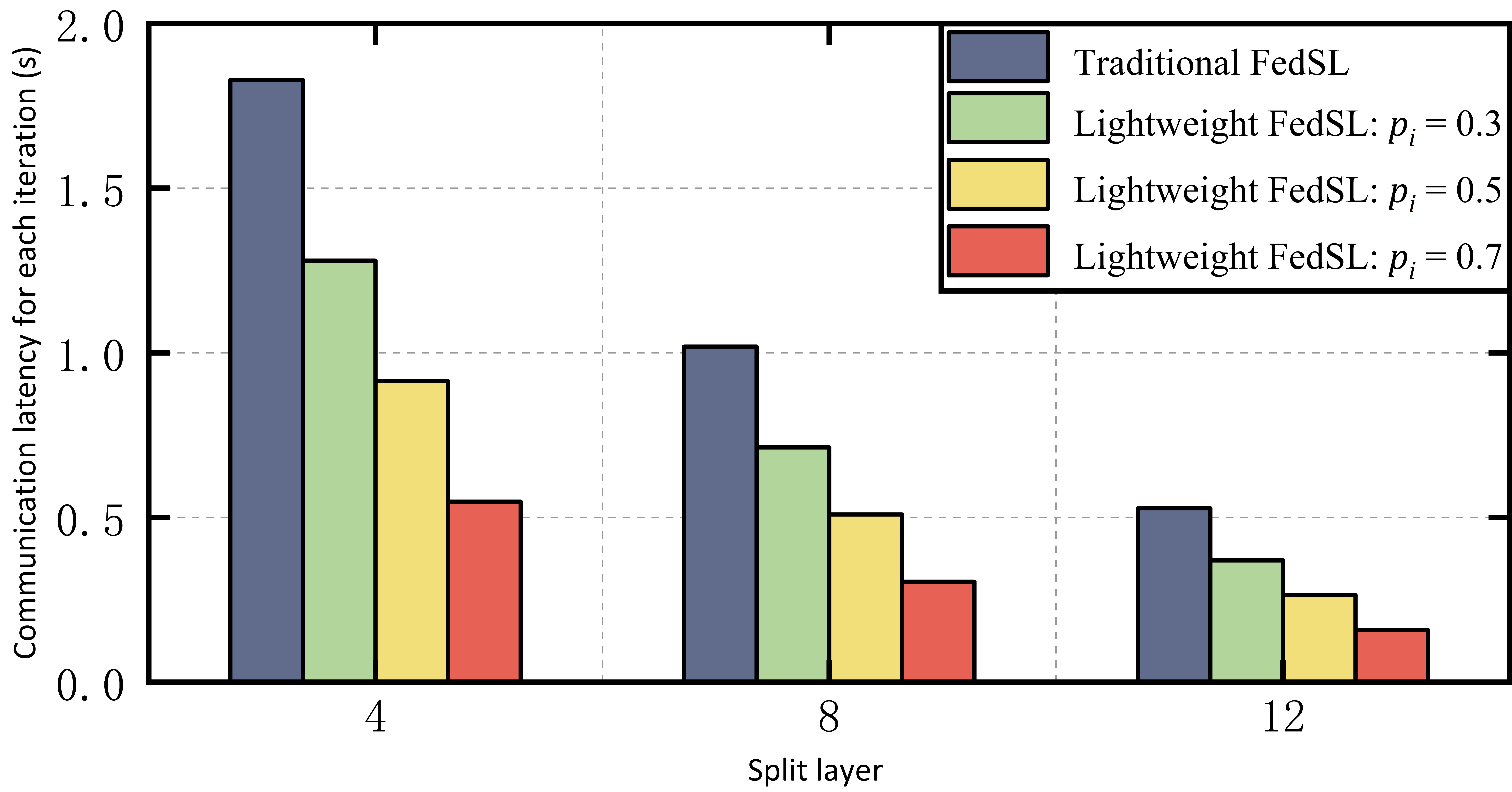}
	\caption{Impact of dropout rate on latency with varying split layer selection under pruning rate $\rho_f=0.35$, quantized bits $q=8$, aggregation frequency $I=5$ and number of clients $K=5$.}
	\label{T}
	\vspace{-2ex}
\end{figure}

\section{Conclusion}
In this letter, we proposed a lightweight FedSL framework based on model pruning and gradient quantization to alleviate the computational and storage burdens on resource-constrained devices. By dropouting the activations transmitted over the channel, the framework conserved communication resources. To evaluate the performance of the proposed framework, we mathematically analyzed the impact of the above factors on convergence performance. Simulation results demonstrated that moderate pruning and quantization not only accelerated training but also enhanced model performance. Shallower split layers exhibited stronger robustness to dropout, which conserved communication resources with only a slight loss in convergence accuracy.

\appendix

\subsection{Proof of Lemma 2}
Fix training round $t\geq 1$. Considering the largest $t_{0} \leq  t$ that satisfies $t_0 \bmod I = 0$ (Note that such $t_{0}$ must exist and $t - t_{0} \leq I$.) Recalling $\mathbf{w}_{\mathbf{c},k,t+1}=\Tilde{\mathbf{w}}_{\mathbf{c},k,t}-\eta\Tilde{\mathbf{g}}'_{\mathbf{c},k,t}$ and $\mathbf{w}_{\mathbf{c},t+1}=\frac1K\sum_{k=1}^K\mathbf{w}_{\mathbf{c},k,t+1}$ for client-side model updating and aggregation, using $\mathbf{m}_t$ to represent the binary matrix obtained by aggregating $\mathbf{m}_{k,t}$ and performing element-wise normalization, we have
\begin{equation}\label{wckt}
	\mathbf{w}_{\mathbf{c},k,t} = \mathbf{m}_{k,t} \odot (\mathbf{w}_{\mathbf{c},t_0}-\eta\sum\limits_{\tau=t_0}^{t} \mathbf{g}'_{\mathbf{c},k,\tau})
\end{equation}
and
\begin{equation}\label{wct}
	\mathbf{w}_{\mathbf{c},t}=\mathbf{m}_t \odot \mathbf{w}_{\mathbf{c},t_0}-\eta\sum\limits_{\tau=t_{0}}^{t}\frac{1}{K}\sum\limits_{k=1}^{K}\mathbf{m}_{k,t} \odot \mathbf{g}'_{\mathbf{c},k,\tau}.
\end{equation}
where Eqn.~\eqref{wckt} follows from
\begin{align*}
	\mathbf{w}_{\mathbf{c},k,t} =& (((\mathbf{w}_{\mathbf{c},k,t_0}-\eta\mathbf{g}'_{\mathbf{c},k,t_0}) \odot \mathbf{m}_{k,t_0}-\eta\mathbf{g}'_{\mathbf{c},k,t_1}) \\
	& \odot \mathbf{m}_{k,t_1}-\eta\mathbf{g}'_{\mathbf{c},k,t_2}) \odot ... \odot \mathbf{m}_{k,t}-\eta\mathbf{g}'_{\mathbf{c},k,t} \odot \mathbf{m}_{k,t} \\
	=& \mathbf{w}_{\mathbf{c},k,t_0} \odot \mathbf{m}_{k,t_0} \odot \mathbf{m}_{k,t_1} \odot ... \odot \mathbf{m}_{k,t} \\
	& -\eta \mathbf{g}'_{\mathbf{c},k,t_0} \odot \mathbf{m}_{k,t_0} \odot \mathbf{m}_{k,t_1} \odot ... \odot \mathbf{m}_{k,t} \\
	& -\eta \mathbf{g}'_{\mathbf{c},k,t_1} \odot \mathbf{m}_{k,t_1} \odot \mathbf{m}_{k,t_2} \odot ... \odot \mathbf{m}_{k,t} \\
	& ...\\
	& -\eta \mathbf{g}'_{\mathbf{c},k,t} \odot \mathbf{m}_{k,t} \\
	\overset{(a)}{=}& \mathbf{m}_{k,t} \odot (\mathbf{w}_{\mathbf{c},t_0}-\eta\sum\limits_{\tau=t_0}^{t} \mathbf{g}'_{\mathbf{c},k,\tau}),
\end{align*}
where (a) follows from $\mathbf{m}_{k,t_0} \odot \mathbf{m}_{k,t_1} \odot ... \odot \mathbf{m}_{k,t}=\mathbf{m}_{k,t_1} \odot \mathbf{m}_{k,t_2} \odot ... \odot \mathbf{m}_{k,t}=...=\mathbf{m}_{k,t}$.

Thus, we have
\begin{align*}
	&\mathbb{E} \Vert \mathbf{w}_{\mathbf{c},t}-\Tilde{\mathbf{w}}_{\mathbf{c},k,t} \Vert^2 \\
	=&\mathbb{E} \Vert \mathbf{w}_{\mathbf{c},t}-\mathbf{w}_{\mathbf{c},k,t}+\mathbf{w}_{\mathbf{c},k,t}-\Tilde{\mathbf{w}}_{\mathbf{c},k,t} \Vert^2 \\
	\overset{(a)}{\leq}& 2\mathbb{E} \Vert \mathbf{w}_{\mathbf{c},t}-\mathbf{w}_{\mathbf{c},k,t} \Vert^2 + 2\mathbb{E} \Vert \mathbf{w}_{\mathbf{c},k,t}-\Tilde{\mathbf{w}}_{\mathbf{c},k,t} \Vert^2 \\
	\overset{(b)}{\leq}& 2\mathbb{E} \Vert (\mathbf{m}_{t}-\mathbf{m}_{k,t}) \odot \mathbf{w}_{\mathbf{c},t}-\eta (\sum\limits_{\tau=t_{0}}^{t}\frac{1}{K}\sum\limits_{k=1}^{K} \mathbf{m}_{k,t} \odot \mathbf{g}'_{\mathbf{c},k,\tau}\\
	& -\sum\limits_{\tau=t_{0}}^{t} \mathbf{m}_{k,t} \odot \mathbf{g}'_{\mathbf{c},k,\tau}) \Vert^2 + 2\rho_t\sum\limits_{l=1}^{L_c}W_l^2 \\
	\overset{(c)}{\leq} &4\eta^2\mathbb{E} \Vert \sum\limits_{\tau=t_{0}}^{t} (\frac{1}{K}\sum\limits_{k=1}^{K} \hat{\mathbf{g}}'_{\mathbf{c},k,\tau})-\sum\limits_{\tau=t_{0}}^{t} \hat{\mathbf{g}}'_{\mathbf{c},k,\tau} \Vert^2 \\
	& +4\mathbb{E}\Vert (\mathbf{m}_{t}-\mathbf{m}_{k,t}) \odot \mathbf{w}_{\mathbf{c},t} \Vert^2+2\rho_{t}\sum\limits_{l=1}^{L_c}W_l^2 \\
	\overset{(d)}{\leq} & 4\eta^2\mathbb{E} \Vert \sum\limits_{\tau=t_{0}}^{t} (\frac{1}{K}\sum\limits_{k=1}^{K} \hat{\mathbf{g}}'_{\mathbf{c},k,\tau})-\sum\limits_{\tau=t_{0}}^{t} \hat{\mathbf{g}}'_{\mathbf{c},k,\tau} \Vert^2 \\
	& +4\sum\limits_{l=1}^{L}W_l^2 + 2\rho_t\sum\limits_{l=1}^{L_c}W_l^2,
\end{align*}
where $\hat{\mathbf{g}}'_{\mathbf{c},k,\tau}=\mathbf{m}_{k,t} \odot \mathbf{g}'_{\mathbf{c},k,\tau}$ and (a) (c) (d) follows by using the inequality $\Vert \sum\limits_{i=1}^{n} \mathbf{z}_i \Vert^2 \leq n \sum\limits_{i=1}^{n}\Vert \mathbf{z}_i \Vert^2 $ for any vectors $\mathbf{z}_i$ and any positive integer n (using $n=2$ in (a) and (c), $n=K$ in (d)). (b) follows from Eqn.~\eqref{wckt}, Eqn.~\eqref{wct} and {{Assumption} 4}.

Note that
\begin{align*}
	& \mathbb{E} \Vert \sum\limits_{\tau=t_{0}}^{t} (\frac{1}{K}\sum\limits_{k=1}^{K} \hat{\mathbf{g}}'_{\mathbf{c},k,\tau})-\sum\limits_{\tau=t_{0}}^{t} \hat{\mathbf{g}}'_{\mathbf{c},k,\tau} \Vert^2 \\
	\overset{(a)}{\leq} & 2\mathbb{E}\{ \Vert \sum\limits_{\tau=t_{0}}^{t} \frac{1}{K}\sum\limits_{k=1}^{K} \hat{\mathbf{g}}'_{\mathbf{c},k,\tau} \Vert^2 + \Vert \sum\limits_{\tau=t_{0}}^{t} \hat{\mathbf{g}}'_{\mathbf{c},k,\tau} \Vert^2 \} \\
	\overset{(b)}{\leq} & 2(t-t_0+1) \mathbb{E} \{ \sum\limits_{\tau=t_{0}}^{t} \Vert \frac{1}{K}\sum\limits_{k=1}^{K} \hat{\mathbf{g}}'_{\mathbf{c},k,\tau} \Vert^2 + \sum\limits_{\tau=t_{0}}^{t} \Vert \hat{\mathbf{g}}'_{\mathbf{c},k,\tau} \Vert^2 \} \\
	\overset{(c)}{\leq} & 2(t-t_0+1) \mathbb{E} \{ \sum\limits_{\tau=t_{0}}^{t} \frac{1}{K}\sum\limits_{k=1}^{K} \Vert \hat{\mathbf{g}}'_{\mathbf{c},k,\tau} \Vert^2 + \sum\limits_{\tau=t_{0}}^{t} \Vert \hat{\mathbf{g}}'_{\mathbf{c},k,\tau} \Vert^2 \} \\
	\overset{(d)}{\leq} & 2(I+1)^{2} \sum\limits_{l=1}^{L_c}G_l^2,
\end{align*}
where, (a)-(c) follows by using the inequality $\Vert \sum\limits_{i=1}^{n} \mathbf{z}_i \Vert^2 \leq n \sum\limits_{i=1}^{n}\Vert \mathbf{z}_i \Vert^2 $ and $n=2$ for (a), $n=(t-t_0+1)$ for (b), and $n=K$ for (c); (d) follows from the {{Assumption} 4}.

Thus, we have
\begin{align*}
	\mathbb{E} \Vert \mathbf{w}_{\mathbf{c},t}\!-\!\Tilde{\mathbf{w}}_{\mathbf{c},k,t} \Vert^2 \!\!\leq\! 8\eta^{2}(I\!+\!1)^{2}\sum\limits_{l=1}^{L_c}\!G_l^2
	\!+\!4\sum\limits_{l=1}^L \!W_l^2 \!+\!2\rho_{t}\sum\limits_{l=1}^{L_c}\!W_l^2.
\end{align*}

\subsection{Proof of the Theorem 1}
For training round $t \leq 1$. By the smoothness of loss function $F$, we have
\begin{align}\label{final}
	\mathbb{E}[F(\mathbf{w}_{t+1})] & \leq \mathbb{E}[F(\mathbf{w}_t)]+\mathbb{E}[\langle \nabla F(\mathbf{w}_t),\mathbf{w}_{t+1}-\mathbf{w}_t \rangle] \nonumber \\ 
	& + \frac{\beta}{2} \mathbb{E}[\Vert \mathbf{w}_{t+1}-\mathbf{w}_t \Vert^2]
\end{align}

Note that
\begin{align}\label{cc}
	&\mathbb{E} \Vert \mathbf{w}_{t+1}-\mathbf{w}_{t} \Vert^2 \nonumber \\
	=& \mathbb{E} \Vert [\mathbf{w}_{\mathbf{c},{t+1}};\mathbf{w}_{\mathbf{s},{t+1}}] - [\mathbf{w}_{\mathbf{c},{t}};\mathbf{w}_{\mathbf{s},{t}}] \Vert^2 \nonumber \\
	=& \mathbb{E} \Vert \mathbf{w}_{\mathbf{c},{t+1}}-\mathbf{w}_{\mathbf{c},{t}};\mathbf{w}_{\mathbf{s},{t+1}}-\mathbf{w}_{\mathbf{s},{t}} \Vert^2 \nonumber \\
	=& \mathbb{E} \Vert \mathbf{w}_{\mathbf{c},{t+1}}-\mathbf{w}_{\mathbf{c},{t}} \Vert^2 
	+ \mathbb{E} \Vert \mathbf{w}_{\mathbf{s},{t+1}}-\mathbf{w}_{\mathbf{s},{t}} \Vert^2.
\end{align}
where $\mathbb{E} \Vert \mathbf{w}_{\mathbf{c},{t+1}}-\mathbf{w}_{\mathbf{c},{t}} \Vert^2$ can be bounded as
\begin{align}\label{bb}
	& \mathbb{E} \Vert \mathbf{w}_{\mathbf{c},{t+1}}-\mathbf{w}_{\mathbf{c},{t}} \Vert^2 = \mathbb{E} \Vert \frac{1}{K}\sum\limits_{k=1}^{K}(\mathbf{w}_{\mathbf{c},k,t+1}-\mathbf{w}_{\mathbf{c},k,t}) \Vert^2 \nonumber \\
	& =\frac{1}{K^2} \sum\limits_{k=1}^{K} \mathbb{E} \Vert (\mathbf{w}_{\mathbf{c},k,t+1}-\Tilde{\mathbf{w}}_{\mathbf{c},k,t})+(\Tilde{\mathbf{w}}_{\mathbf{c},k,t}-\mathbf{w}_{\mathbf{c},k,t}) \Vert^2 \nonumber \\
	& \leq \frac{2}{K^2} \sum\limits_{k=1}^{K} (\mathbb{E} \Vert \mathbf{w}_{\mathbf{c},k,t+1}-\Tilde{\mathbf{w}}_{\mathbf{c},k,t} \Vert^2 +\mathbb{E} \Vert \Tilde{\mathbf{w}}_{\mathbf{c},k,t}-\mathbf{w}_{\mathbf{c},k,t} \Vert^2) \nonumber \\
	& \overset{(a)}{\leq} \frac{2}{K^2} \sum\limits_{k=1}^{K}(\eta^2 \mathbb{E}\Vert\Tilde{\mathbf{g}}'_{\mathbf{c},k,t} \Vert^2 + \rho_t \sum\limits_{l=1}^{L_c}W_l^2) \nonumber \\
	& \overset{(b)}{\leq} \frac{2}{K} \sum\limits_{l=1}^{L_c}(\eta^{2}\sigma_l^2+\rho_t W_l^2)+2\eta^2\mathbb{E} \Vert \frac{1}{K} \sum\limits_{k=1}^{K}\nabla'F(\Tilde{\mathbf{w}}_{\mathbf{c},k,t}) \Vert^2,
\end{align}
where (a) follows from $\mathbf{w}_{\mathbf{c},k,t+1}=\Tilde{\mathbf{w}}_{\mathbf{c},k,t}-\eta\Tilde{\mathbf{g}}'_{\mathbf{c},k,t}$, (b) follows from
\begin{align}
	& \mathbb{E} \Vert \Tilde{\mathbf{g}}'_{\mathbf{c},k,t} \Vert^2 \nonumber \\
	\overset{(a)}{=}& \mathbb{E} \Vert  \Tilde{\mathbf{g}}'_{\mathbf{c},k,t} - \nabla'F(\Tilde{\mathbf{w}}_{\mathbf{c},k,t}) \Vert^2 + \mathbb{E} \Vert \nabla'F(\Tilde{\mathbf{w}}_{\mathbf{c},k,t}) \Vert^2 \nonumber \\
	=& \mathbb{E} \Vert Q(\Tilde{\mathbf{g}}_{\mathbf{c},k,t} - \nabla F(\Tilde{\mathbf{w}}_{\mathbf{c},k,t})) \Vert^2 + \mathbb{E} \Vert \nabla'F(\Tilde{\mathbf{w}}_{\mathbf{c},k,t}) \Vert^2 \nonumber \\
	\overset{(b)}{=}&  \mathbb{E} \Vert  \Tilde{\mathbf{g}}_{\mathbf{c},k,t} - \nabla F(\Tilde{\mathbf{w}}_{\mathbf{c},k,t}) \Vert^2 + \mathbb{E} \Vert \nabla F(\Tilde{\mathbf{w}}_{\mathbf{c},k,t}) \Vert^2 \nonumber \\
	=& \mathbb{E} \Vert \mathbf{m}_{k,t} \odot (\mathbf{g}_{\mathbf{c},k,t} - \nabla F(\mathbf{w}_{\mathbf{c},k,t})) \Vert^2 + \mathbb{E} \Vert \nabla F(\Tilde{\mathbf{w}}_{\mathbf{c},k,t}) \Vert^2 \nonumber \\
	\overset{(c)}{\leq}& \sum\limits_{l=1}^{L_c}\sigma_l^2+\mathbb{E} \Vert \nabla'F(\Tilde{\mathbf{w}}_{\mathbf{c},k,t}) \Vert^2,
\end{align}
where (a) follows by the unbiased stochastic gradient {{Assumption} 2} and the definition of variance, i.e., $\mathbb{E}[\Vert \mathbf{x} \Vert^2 ] = \mathbb{E} [\Vert \mathbf{x} - \mathbb{E} [\mathbf{x}] \Vert^2 ] + [\mathbb{E}\Vert\mathbf{x}\Vert]^2$; (b) and (c) follow from {{Assumption} 6} and {{Assumption} 3}, respectively.

Similarly, $\mathbb{E} \Vert \mathbf{w}_{\mathbf{s},{t+1}}-\mathbf{w}_{\mathbf{s},{t}} \Vert^2$ can be bounded as
\begin{flalign}\label{dd}
	&\mathbb{E} \Vert \mathbf{w}_{\mathbf{s},{t+1}}-\mathbf{w}_{\mathbf{s},{t}} \Vert^2 = \eta^2 \mathbb{E} \Vert \frac{1}{K} \sum\limits_{k=1}^{K} \mathbf{g}_{\mathbf{s},k,t} \Vert^2 \nonumber \\
	=&\eta^2 \mathbb{E} \Vert \frac{1}{K} \sum\limits_{k=1}^{K} (\mathbf{g}_{\mathbf{s},k,t}-\nabla F(\mathbf{w}_{\mathbf{s},k,t})) \Vert^2 \nonumber \\
	+&\eta^2 \mathbb{E} \Vert \frac{1}{K} \sum\limits_{k=1}^{K} \nabla F(\mathbf{w}_{\mathbf{s},k,t}) \Vert^2 \nonumber \\
	\overset{(a)}{\leq}& \frac{\eta^2}{K} \sum\limits_{l=L_c+1}^{L}\sigma_l^2 + \eta^2 \mathbb{E} \Vert \frac{1}{K} \sum\limits_{k=1}^{K} \nabla F(\mathbf{w}_{\mathbf{s},k,t}) \Vert^2,
\end{flalign}
where (a) follows from {{Assumption} 3}.

Thus, substituting Eqn.~\eqref{bb} and Eqn.~\eqref{dd} into Eqn.~\eqref{cc}, $\mathbb{E} \Vert \mathbf{w}_{t+1}-\mathbf{w}_{t} \Vert^2$ can be bounded as
\begin{align}\label{term3}
	&\mathbb{E} \Vert \mathbf{w}_{t+1}-\mathbf{w}_{t} \Vert^2 \leq \frac{\eta^2}{K} \sum\limits_{l=1}^{L} \sigma_l^2 + \frac{\eta^2}{K} \sum\limits_{l=1}^{L_c} \sigma_l^2 + \frac{2\rho_t}{K} \sum\limits_{l=1}^{L_c} W_l^2 \nonumber \\
	&+2\eta^2 \mathbb{E} \Vert \frac{1}{K} \sum\limits_{k=1}^{K}\nabla'F(\Tilde{\mathbf{w}}_{\mathbf{c},k,t}) \Vert^2 +\eta^2 \mathbb{E} \Vert \frac{1}{K} \sum\limits_{k=1}^{K} \nabla F(\mathbf{w}_{\mathbf{s},k,t}) \Vert^2
\end{align}
We further note that
\begin{align}\label{ee}
	& \mathbb{E} \langle \nabla F(\mathbf{w}_t), \mathbf{w}_{t+1}-\mathbf{w}_t \rangle \nonumber \\
	=& \mathbb{E} \langle \nabla F(\mathbf{w}_{\mathbf{s},t}), \mathbf{w}_{\mathbf{s},t+1}-\mathbf{w}_{\mathbf{s},t} \rangle
	+\mathbb{E} \langle \nabla F(\mathbf{w}_{\mathbf{c},t}), \mathbf{w}_{\mathbf{c},t+1}-\mathbf{w}_{\mathbf{c},t} \rangle.
\end{align}
The first term can be written as
\begin{align}\label{ff}
	&\mathbb{E} \langle \nabla F(\mathbf{w}_{\mathbf{s},t}), \mathbf{w}_{\mathbf{s},t+1}-\mathbf{w}_{\mathbf{s},t} \rangle \nonumber \\
	=&-\eta \mathbb{E} \langle \nabla F(\mathbf{w}_{\mathbf{s},t}),\frac{1}{K}\sum\limits_{k=1}^{K}\mathbf{g}_{\mathbf{s},k,t} \rangle \nonumber \\
	\overset{(a)}{=}& -\eta \mathbb{E} \langle \nabla F (\mathbf{w}_{\mathbf{c},t}),\frac{1}{K}\sum\limits_{k=1}^{K} \nabla F (\mathbf{w}_{\mathbf{c},k,t}) \rangle \nonumber \\
	\overset{(b)}{=}& -\frac{\eta}{2} \mathbb{E} \Vert \nabla F(\mathbf{w}_{\mathbf{s},t}) \Vert^2 - \frac{\eta}{2} \mathbb{E} \Vert \frac{1}{K}\sum\limits_{k=1}^{K} \nabla F (\mathbf{w}_{\mathbf{s},k,t}) \Vert^2 \nonumber \\
	&+\frac{\eta}{2} \mathbb{E} \Vert \nabla F(\mathbf{w}_{\mathbf{s},t}) - \frac{1}{K}\sum\limits_{k=1}^{K} \nabla F (\mathbf{w}_{\mathbf{s},k,t})\Vert^2,
\end{align}
where (a) follows from {{Assumption} 2}; (b) follows from the identity $\langle \mathbf{a},\mathbf{b} \rangle = \frac{1}{2}(\Vert \mathbf{a} \Vert^2 + \Vert \mathbf{b} \Vert^2 - \Vert \mathbf{a}-\mathbf{b} \Vert^2)$ . For the second term in Eqn.~\eqref{ee}, we have
\begin{align}\label{gg}
	&\mathbb{E} \langle \nabla F(\mathbf{w}_{\mathbf{c},t}), \mathbf{w}_{\mathbf{c},t+1}-\mathbf{w}_{\mathbf{c},t} \rangle \nonumber \\
	\overset{(a)}{=}& \mathbb{E} \langle \nabla F(\mathbf{w}_{\mathbf{c},t}),\frac{1}{K}\sum\limits_{k=1}^{K} (\Tilde{\mathbf{w}}_{\mathbf{c},k,t}-\eta\Tilde{\mathbf{g}}'_{\mathbf{c},k,t})-\mathbf{w}_{\mathbf{c},t} \rangle \nonumber \\
	=& \mathbb{E} \langle \nabla F(\mathbf{w}_{\mathbf{c},t}),-\eta\frac{1}{K}\sum\limits_{k=1}^{K}\Tilde{\mathbf{g}}'_{\mathbf{c},k,t} \rangle \nonumber \\
	&+\mathbb{E} \langle \nabla F(\mathbf{w}_{\mathbf{c},t}),\frac{1}{K}\sum\limits_{k=1}^{K}\Tilde{\mathbf{w}}_{\mathbf{c},k,t}-\mathbf{w}_{\mathbf{c},t} \rangle
\end{align}
where term $\mathbb{E} \langle \nabla F(\mathbf{w}_{\mathbf{c},t}),-\eta\frac{1}{K}\sum\limits_{k=1}^{K}\Tilde{\mathbf{g}}'_{\mathbf{c},k,t} \rangle$ in Eqn.~\eqref{gg} can be written as
\begin{align}\label{hh}
	&\mathbb{E} \langle \nabla F(\mathbf{w}_{\mathbf{c},t}),-\eta\frac{1}{K}\sum\limits_{k=1}^{K}\Tilde{\mathbf{g}}'_{\mathbf{c},k,t} \rangle \nonumber \\
	=& -\eta \mathbb{E} \langle \nabla F(\mathbf{w}_{\mathbf{c},t}),\frac{1}{K}\sum\limits_{k=1}^{K}\Tilde{\mathbf{g}}'_{\mathbf{c},k,t} \rangle \nonumber \\
	=& -\frac{\eta}{2} \mathbb{E} \Vert \nabla' F(\mathbf{w}_{\mathbf{c},t}) \Vert^2 - \frac{\eta}{2} \mathbb{E} \Vert \frac{1}{K}\sum\limits_{k=1}^{K} \nabla' F (\Tilde{\mathbf{w}}_{\mathbf{c},k,t}) \Vert^2 \nonumber \\
	&+\frac{\eta}{2} \mathbb{E} \Vert \nabla F(\mathbf{w}_{\mathbf{c},t}) - \frac{1}{K}\sum\limits_{k=1}^{K} \nabla' F (\Tilde{\mathbf{w}}_{\mathbf{c},k,t})\Vert^2
\end{align}
and term $\mathbb{E} \langle \nabla F(\mathbf{w}_{\mathbf{c},t}),\frac{1}{K}\sum\limits_{k=1}^{K}\Tilde{\mathbf{w}}_{\mathbf{c},k,t}-\mathbf{w}_{\mathbf{c},t} \rangle$ in Eqn.~\eqref{gg} can be bounded as
\begin{align}\label{ii}
	&\mathbb{E} \langle \nabla F(\mathbf{w}_{\mathbf{c},t}),\frac{1}{K}\sum\limits_{k=1}^{K}\Tilde{\mathbf{w}}_{\mathbf{c},k,t}-\mathbf{w}_{\mathbf{c},t} \rangle \nonumber \\
	=& \frac{1}{K} \sum\limits_{k=1}^{K} \mathbb{E} \langle \nabla F(\mathbf{w}_{\mathbf{c},t}),\Tilde{\mathbf{w}}_{\mathbf{c},k,t}-\mathbf{w}_{\mathbf{c},t} \rangle \nonumber \\
	\overset{(a)}{\leq} & \frac{1}{2K} \sum\limits_{k=1}^{K} (\mathbb{E} \Vert \nabla F(\mathbf{w}_{\mathbf{c},t}) \Vert^2 + \mathbb{E} \Vert \Tilde{\mathbf{w}}_{\mathbf{c},k,t}-\mathbf{w}_{\mathbf{c},t} \Vert^2 ) \nonumber \\
	\overset{(b)}{\leq}& \frac{1}{2} ((8\eta^{2}(I+1)^{2}+1)\sum\limits_{l=1}^{L_c}G_l^2+4\sum\limits_{l=1}^L W_l^2 +2\rho_t\sum\limits_{l=1}^{L_c}W_l^2),
\end{align}
where (a) follows by the inequality $\langle \mathbf{a},\mathbf{b} \rangle \leq \frac{1}{2} (\Vert \mathbf{a} \Vert^2 + \Vert \mathbf{b} \Vert^2)$; (b) follows from {{Assumption} 4} and {{Lemma} 2}.

Substituting Eqn.~\eqref{hh} and Eqn.~\eqref{ii} into Eqn.~\eqref{gg}, $\mathbb{E} \langle \nabla F(\mathbf{w}_{\mathbf{c},t}), \mathbf{w}_{\mathbf{c},t+1}-\mathbf{w}_{\mathbf{c},t} \rangle$ can be bounded as
\begin{align}\label{JJ}
	&\mathbb{E} \langle \nabla F(\mathbf{w}_{\mathbf{c},t}), \mathbf{w}_{\mathbf{c},t+1}-\mathbf{w}_{\mathbf{c},t} \rangle \nonumber \\
	\leq&  -\frac{\eta}{2} \mathbb{E} \Vert \nabla F(\mathbf{w}_{\mathbf{c},t}) \Vert^2 - \frac{\eta}{2} \mathbb{E} \Vert \frac{1}{K}\sum\limits_{k=1}^{K} \nabla' F (\Tilde{\mathbf{w}}_{\mathbf{c},k,t}) \Vert^2 \nonumber \\
	&+\frac{\eta}{2} \mathbb{E} \Vert \nabla F(\mathbf{w}_{\mathbf{c},t}) - \frac{1}{K}\sum\limits_{k=1}^{K} \nabla' F (\Tilde{\mathbf{w}}_{\mathbf{c},k,t})\Vert^2 \nonumber \\
	&+ \frac{1}{2} ((8\eta^{2}(I+1)^{2}+1)\sum\limits_{l=1}^{L_c}G_l^2+4\sum\limits_{l=1}^L W_l^2 +2\rho_t\sum\limits_{l=1}^{L_c}W_l^2)
\end{align}
Substituting Eqn.~\eqref{ff} and Eqn.~\eqref{JJ} into Eqn.~\eqref{ee}, we have
\begin{align}\label{term2}
	\mathbb{E} \langle \nabla & F(\mathbf{w}_t), \mathbf{w}_{t+1}-\mathbf{w}_t \rangle \nonumber \\
	\leq  -\frac{\eta}{2}& \{ \mathbb{E} \Vert \nabla F(\mathbf{w}_{\mathbf{s},t}) \Vert^2 + \mathbb{E} \Vert \nabla F(\mathbf{w}_{\mathbf{c},t}) \Vert^2 \} \nonumber \\
	-\frac{\eta}{2}& \{ \mathbb{E} \Vert \frac{1}{K}\sum\limits_{k=1}^{K} \nabla F (\mathbf{w}_{\mathbf{s},k,t}) \Vert^2 + \mathbb{E} \Vert \frac{1}{K}\sum\limits_{k=1}^{K} \nabla' F (\Tilde{\mathbf{w}}_{\mathbf{c},k,t}) \Vert^2 \nonumber \} \nonumber \\
	+\frac{\eta}{2}& \{ \mathbb{E} \Vert \nabla F(\mathbf{w}_{\mathbf{s},t}) - \frac{1}{K}\sum\limits_{k=1}^{K} \nabla F (\mathbf{w}_{\mathbf{s},k,t})\Vert^2 \nonumber \\
	&+ \mathbb{E} \Vert \nabla F(\mathbf{w}_{\mathbf{c},t}) - \frac{1}{K}\sum\limits_{k=1}^{K} \nabla' F (\Tilde{\mathbf{w}}_{\mathbf{c},k,t})\Vert^2  \} \nonumber \\
	+\frac{1}{2}& ((8\eta^{2}(I+1)^{2}+1)\sum\limits_{l=1}^{L_c}G_l^2+4\sum\limits_{l=1}^L W_l^2 +2\rho_t\sum\limits_{l=1}^{L_c}W_l^2)
\end{align}
Then substituting Eqn.~\eqref{term3} and Eqn.~\eqref{term2} into Eqn.~\eqref{final}, we have
\begin{align}\label{final1}
	&\mathbb{E}[F(\mathbf{w}_{t+1})] \nonumber \\
	\leq& \mathbb{E}[F(\mathbf{w}_t)] -\frac{\eta}{2} \mathbb{E} \Vert \nabla F(\mathbf{w}_t) \Vert^2 \nonumber \\
	&- \frac{\eta-\eta^{2}\beta}{2} \mathbb{E} \Vert \frac{1}{K}\sum\limits_{k=1}^{K} \nabla F (\mathbf{w}_{\mathbf{s},k,t}) \Vert^2 \nonumber \\
	&- \frac{\eta-2\eta^{2}\beta}{2} \mathbb{E} \Vert \frac{1}{K}\sum\limits_{k=1}^{K} \nabla' F (\Tilde{\mathbf{w}}_{\mathbf{c},k,t}) \Vert^2 \nonumber \\
	&+ \frac{\beta\eta^2}{2K} \sum\limits_{l=1}^{L} \sigma_l^2 + \frac{\beta\eta^2}{2K} \sum\limits_{l=1}^{L_c} \sigma_l^2 + \frac{\beta\rho_t}{K}\sum\limits_{l=1}^{L_c} W_l^2 
	\nonumber \\
	&+\frac{1}{2} ((8\eta^{2}(I+1)^{2}+1)\sum\limits_{l=1}^{L_c}G_l^2+4\sum\limits_{l=1}^L W_l^2 +2\rho_{t}\sum\limits_{l=1}^{L_c}W_l^2) \nonumber \\
	&+\frac{\eta}{2} \{ \mathbb{E} \Vert \nabla F(\mathbf{w}_{\mathbf{s},t}) - \frac{1}{K}\sum\limits_{k=1}^{K} \nabla F (\mathbf{w}_{\mathbf{s},k,t})\Vert^2 \nonumber \\
	&+ \mathbb{E} \Vert \nabla F(\mathbf{w}_{\mathbf{c},t}) - \frac{1}{K}\sum\limits_{k=1}^{K} \nabla' F (\Tilde{\mathbf{w}}_{\mathbf{c},k,t})\Vert^2 \} \nonumber \\
	\overset{(a)}{\leq}& \mathbb{E}[F(\mathbf{w}_t)] -\frac{\eta}{2} \mathbb{E} \Vert \nabla F(\mathbf{w}_t) \Vert^2 \nonumber \\
	&+ \frac{\beta\eta^2}{2K} \sum\limits_{l=1}^{L} \sigma_l^2 + \frac{\beta\eta^2}{2K} \sum\limits_{l=1}^{L_c} \sigma_l^2 + \frac{\beta\rho_t}{K}\sum\limits_{l=1}^{L_c} W_l^2 
	\nonumber \\
	&+\frac{1}{2} ((8\eta^{2}(I+1)^{2}+1)\sum\limits_{l=1}^{L_c}G_l^2+4\sum\limits_{l=1}^L W_l^2 +2\rho_t\sum\limits_{l=1}^{L_c}W_l^2) \nonumber \\
	&+\frac{\eta}{2} \{ \mathbb{E} \Vert \nabla F(\mathbf{w}_{\mathbf{s},t}) - \frac{1}{K}\sum\limits_{k=1}^{K} \nabla F (\mathbf{w}_{\mathbf{s},k,t})\Vert^2 \nonumber \\
	&+ \mathbb{E} \Vert \nabla F(\mathbf{w}_{\mathbf{c},t}) - \frac{1}{K}\sum\limits_{k=1}^{K} \nabla' F (\Tilde{\mathbf{w}}_{\mathbf{c},k,t})\Vert^2 \} \nonumber \\
	\overset{(b)}{\leq}& \mathbb{E}[F(\mathbf{w}_t)] -\frac{\eta}{2} \mathbb{E} \Vert \nabla F(\mathbf{w}_t) \Vert^2 \nonumber \\
	&+ \frac{\beta\eta^2}{2K} \sum\limits_{l=1}^{L} \sigma_l^2 + \frac{\beta\eta^2}{2K} \sum\limits_{l=1}^{L_c} \sigma_l^2 + \frac{\beta\rho_t}{K}\sum\limits_{l=1}^{L_c} W_l^2 
	+ 2\sum\limits_{l=1}^{L_c} J_l^2  \nonumber \\
	&+(2\beta^{2}+\frac{1}{2}) ((8\eta^{2}(I+1)^{2}+1)\sum\limits_{l=1}^{L_c}G_l^2+4\sum\limits_{l=1}^L W_l^2  \nonumber \\
	& ~~ +2\rho_t\sum\limits_{l=1}^{L_c}W_l^2)
\end{align}
where (a) follows from $0<\eta\leq \frac{1}{2\beta}$ and (b) holds because of the following inequality Eqn.~\eqref{kk} and Eqn.~\eqref{ll}
\begin{align}\label{kk}
	&\mathbb{E} \Vert \nabla F(\mathbf{w}_{\mathbf{s},t}) - \frac{1}{K}\sum\limits_{k=1}^{K} \nabla F (\mathbf{w}_{\mathbf{s},k,t})\Vert^2 \nonumber \\
	=&\mathbb{E} \Vert \frac{1}{K}\sum\limits_{k=1}^{K} \nabla F(\mathbf{w}_{\mathbf{s},t}) - \frac{1}{K}\sum\limits_{k=1}^{K} \nabla F (\mathbf{w}_{\mathbf{s},k,t})\Vert^2 \nonumber \\
	\leq & \frac{1}{K}\sum\limits_{k=1}^{K} \mathbb{E} \Vert \nabla F(\mathbf{w}_{\mathbf{s},t}) -  \nabla F (\mathbf{w}_{\mathbf{s},k,t})\Vert^2 \nonumber \\
	\overset{(a)}{\leq}& \frac{\beta^2}{K} \sum\limits_{k=1}^{K} \mathbb{E} \Vert \mathbf{w}_{\mathbf{s},t}-\mathbf{w}_{\mathbf{s},k,t} \Vert^2 \overset{(b)}{=} 0,
\end{align}
where (a) follows from {{Assumption} 1}; (b) holds because the server-side model of each client is the aggregated version of the whole server-side model. The term $\mathbb{E} \Vert \nabla F(\mathbf{w}_{\mathbf{c},t}) - \frac{1}{K}\sum\limits_{k=1}^{K} \nabla' F (\Tilde{\mathbf{w}}_{\mathbf{c},k,t})\Vert^2$ in Eqn.~\eqref{final1} can be bounded as
\begin{align}\label{ll}
	&\mathbb{E} \Vert \nabla F(\mathbf{w}_{\mathbf{c},t}) - \frac{1}{K}\sum\limits_{k=1}^{K} \nabla' F (\Tilde{\mathbf{w}}_{\mathbf{c},k,t})\Vert^2 \nonumber \\
	\leq & \frac{1}{K}\sum\limits_{k=1}^{K} \mathbb{E} \Vert \nabla F(\mathbf{w}_{\mathbf{c},t}) - \nabla' F (\Tilde{\mathbf{w}}_{\mathbf{c},k,t})\Vert^2 \nonumber \\
	\leq & \frac{1}{K}\sum\limits_{k=1}^{K} \mathbb{E} \Vert \nabla F(\mathbf{w}_{\mathbf{c},t}) - \nabla F (\Tilde{\mathbf{w}}_{\mathbf{c},k,t}) \nonumber \\
	& +\nabla F (\Tilde{\mathbf{w}}_{\mathbf{c},k,t})-\nabla' F (\Tilde{\mathbf{w}}_{\mathbf{c},k,t})\Vert^2 \nonumber \\
	\overset{(a)}{\leq} & \frac{2}{K}\sum\limits_{k=1}^{K} \mathbb{E} \{ \Vert \nabla F(\mathbf{w}_{\mathbf{c},t}) - \nabla F (\Tilde{\mathbf{w}}_{\mathbf{c},k,t}) \Vert^2 + \sum\limits_{l=1}^{L_c}J_l^2 \} \nonumber \\
	\overset{(b)}{\leq} & \frac{2}{K}\sum\limits_{k=1}^{K} \mathbb{E} \{ \beta^2 \Vert \mathbf{w}_{\mathbf{c},t} - \Tilde{\mathbf{w}}_{\mathbf{c},k,t} \Vert^2 + \sum\limits_{l=1}^{L_c}J_l^2 \} \nonumber \\
	\overset{(c)}{\leq} & 2\beta^2((8\eta^{2}(I+1)^{2}+1)\sum\limits_{l=1}^{L_c}G_l^2+4\sum\limits_{l=1}^L W_l^2 \nonumber \\
	&+2\rho_t\sum\limits_{l=1}^{L_c}W_l^2)+2\sum\limits_{l=1}^{L_c}J_l^2
\end{align}
where (a) follows from the inequality $\Vert \sum\limits_{i=1}^{n} \mathbf{z}_i \Vert^2 \leq n \sum\limits_{i=1}^{n}\Vert \mathbf{z}_i \Vert^2 $ and {{Assumption} 6}, (b) follows from {{Assumption} 1} and (c) follows from {{Lemma} 2}.

Rearranging Eqn.~\eqref{final1} and dividing both sides by $\frac{\eta}{2T}$ and summing over $t \in \{1,...,T\}$, the inequality can be written as
\begin{align}
	& \frac{1}{T}\sum\limits_{t=1}^{T}\mathbb{E} \Vert \nabla F(\mathbf{w}_t) \Vert^2 \nonumber \\ 
	\overset{(a)}{<}&  \frac{2 (F(\mathbf{w}_1)-F(\mathbf{w}*))}{\eta T} \nonumber \\
	&+ \sum\limits_{l=1}^{L} (\frac{\beta \eta}{K}\sigma_l^2 + \frac{1}{\eta}G_l^2 + \frac{4(4\beta^{2}+1)}{\eta}W_l^2) \nonumber \\ 
	&+ \sum\limits_{l=1}^{L_c}(\frac{\beta \eta}{K}\sigma_l^2 + \frac{(4 \beta^2 + 1)(8\eta^{2} {(I+1)^2} + 1)}{\eta}  G_l^2 \nonumber \\ 
	&+ \frac{\rho_{f}(4K\beta^2+K+\beta)}{K\eta}W_l^2 + \frac{4}{\eta} J_l^2) \nonumber \\
	\overset{(b)}{\leq}& \frac{2 \vartheta}{\eta T} + \sum\limits_{l=1}^{L} (\frac{\beta \eta}{K}\sigma_l^2 + \frac{1}{\eta}G_l^2 + \frac{4(4\beta^{2}+1)}{\eta}W_l^2) \nonumber \\ 
	&+ \sum\limits_{l=1}^{L_c}(\frac{\beta \eta}{K}\sigma_l^2 + \frac{(4 \beta^2 + 1)(8\eta^{2} {(I+1)^2} + 1)}{\eta}  G_l^2 \nonumber \\ 
	&+ \frac{\rho_{f}(4K\beta^2+K+\beta)}{K\eta}W_l^2 + \frac{4}{\eta} J_l^2),
\end{align}
where (a) follows from {Lemma 1}, (b) follows because $F(\mathbf{w}*)$ is the minimum value of problem.

\section*{Acknowledgement}
The work of Dr. Wanli Ni presented in this paper was conducted during his doctoral studies at Beijing University of Posts and Telecommunications.

\bibliographystyle{IEEEtran}
\bibliography{IEEEabrv,ref}

\end{document}